\newtheorem{prop}{Proposition}
\newtheorem{theorem}{Theorem}
\newcommand{\footremember}[2]{%
    \footnote{#2}
    \newcounter{#1}
    \setcounter{#1}{\value{footnote}}%
}
\begin{document}
\title{Optimal approximation of continuous functions by very deep ReLU networks\footnote{Accepted for presentation at Conference on Learning Theory (COLT) 2018}}
 \author{Dmitry Yarotsky\footremember{skoltech}{Skolkovo Institute of Science and Technology, Skolkovo Innovation Center, 3 Nobel st., Moscow  143026
 Russia}\footremember{iitp}{Institute for Information Transmission Problems, Bolshoy Karetny per. 19, build.1, Moscow 127051, Russia}\\
 \texttt{d.yarotsky@skoltech.ru}
 }
 \date{}
\maketitle

\begin{abstract} We consider approximations of general continuous functions on finite-dimensional cubes by general deep ReLU neural networks and study the approximation rates with respect to the modulus of continuity of the function and the total number of weights $W$ in the network. We establish the complete phase diagram of feasible approximation rates and show that it includes two distinct phases. One phase corresponds to slower approximations that can be achieved with constant-depth networks and continuous weight assignments. The other phase provides faster approximations at the cost of depths necessarily growing as a power law $L\sim W^{\alpha}, 0<\alpha\le 1,$ and with necessarily discontinuous weight assignments. In particular, we prove that constant-width fully-connected networks of depth $L\sim W$ provide the fastest possible approximation rate $\|f-\widetilde f\|_\infty = O(\omega_f(O(W^{-2/\nu})))$ that cannot be achieved with less deep networks.  
\end{abstract}

\section{Introduction}
Expressiveness of deep neural networks with piecewise-linear (in particular, ReLU) activation functions has been a topic of much theoretical research in recent years. The topic has many aspects, with connections to combinatorics \citep{montufar2014number,telgarsky2016benefits}, topology \citep{bianchini2014complexity}, Vapnik-Chervonenkis dimension \citep{bartlett1998almost,sakurai1999tight} and fat-shattering dimension \citep{kearns1990efficient,anthony2009neural}, hierarchical decompositions of functions \citep{mhaskar2016learning}, information theory \citep{petersen2017optimal}, etc.

Here we adopt the perspective of classical approximation theory, in which the problem of expressiveness can be basically described as follows. Suppose that $f$ is a multivariate function, say on the cube $[0,1]^\nu$, and has some prescribed regularity properties; how efficiently can one approximate $f$ by deep neural networks?  The question has been studied in several recent publications. Depth-separation results for some explicit families of functions have been obtained in \cite{safran2016depth, telgarsky2016benefits}. General upper and lower bounds on approximation rates for functions characterized by their degree of smoothness have been obtained in \cite{liang2016why, yarotsky2017nn}. \cite{hanin2017approximating, lu2017expressive} establish the universal approximation property and convergence rates for deep and ``narrow'' (fixed-width) networks. \cite{petersen2017optimal} establish convergence rates for approximations of discontinuous functions. Generalization capabilities of deep ReLU networks trained on finite noisy samples are studied in \cite{schmidthieber2017nonparametric}.

In the present paper we consider and largely resolve the following  question: what is the optimal rate of approximation of general continuous functions by deep ReLU networks, in terms of the number $W$ of network weights and the modulus of continuity of the function? Specifically, for any $W$ we seek a network architecture with $W$ weights so that for any continuous $f:[0,1]^\nu\to \mathbb R$, as $W$ increases, we would achieve the best convergence in the uniform  norm $\|\cdot\|_\infty$ when using these architectures to approximate $f$.

In the slightly different but closely related context of approximation on balls in Sobolev spaces $\mathcal W^{d,\infty}([0,1]^\nu),$ this question of optimal convergence rate has been studied in \cite{yarotsky2017nn}. That paper described ReLU network architectures with $W$ weights ensuring approximation with error $O(W^{-d/\nu}\ln^{d/\nu} W)$ (Theorem 1). The construction was linear in the sense that the network weights depended on the approximated function linearly. Up to the logarithmic factor, this approximation rate matches the optimal rate over all parametric models under assumption of continuous parameter selection (\cite{devore1989optimal}). It was also shown in Theorem 2 of \cite{yarotsky2017nn} that one can slightly (by a logarithmic factor) improve over this conditionally optimal rate by adjusting network architectures to the approximated function.  

On the other hand, it was proved in Theorem 4 of \cite{yarotsky2017nn}  that ReLU networks generally cannot provide approximation with accuracy better than $O(W^{-2d/\nu})$ -- a bound with the power $\frac{2d}{\nu}$ twice as big as in the previously mentioned existence result. As was shown in the same theorem, this bound can be strengthened for shallow networks. However, without imposing depth constraints, there was a serious gap between the powers $\frac{2d}{\nu}$ and $\frac{d}{\nu}$ in the lower and upper accuracy bounds that was left open in that paper.

In the present paper we bridge this gap in the setting of continuous functions (which is slightly more general than the setting of the Sobolev space of Lipschitz functions, $\mathcal W^{1,\infty}([0,1]^\nu)$, i.e. the case $d=1$). Our key insight is the close connection between approximation theory and VC dimension bounds. The lower bound on the approximation accuracy in Theorem 4 of \cite{yarotsky2017nn} was derived using the upper VCdim bound $O(W^2)$ from \cite{goldberg1995bounding}. More accurate upper and lower bounds involving the network depth $L$ have been given in \cite{bartlett1998almost,sakurai1999tight}. The recent paper \cite{bartlett2017nearly} establishes nearly tight lower and upper VCdim bounds: $cWL\ln (W/L)\le\operatorname{VCdim}(W,L)\le CWL\ln W$, where $\operatorname{VCdim}(W,L)$ is the largest VC dimension
of a piecewise linear network with $W$ weights and $L$ layers. The key element in the proof of the lower bound is the ``bit extraction technique'' (\cite{bartlett1998almost}) providing a way to compress significant expressiveness in a single network weight.  In the present paper we adapt this technique to the approximation theory setting.

Our main result is the complete phase diagram for the parameterized family of approximation rates involving the modulus of continuity $\omega_f$ of the function $f$ and the number of weights $W$. We prove that using very deep networks one can approximate function $f$ with error $O(\omega_f(O(W^{-2/\nu})))$, and this  rate is optimal up to a logarithmic factor. In fact, the depth of the networks must necessarily grow almost linearly with $W$ to achieve this rate, in sharp contrast to  shallow networks that can provide approximation with error $O(\omega_f(O(W^{-1/\nu})))$. Moreover, whereas the slower rate $O(\omega_f(O(W^{-1/\nu})))$ can be achieved using a continuous weight assignment in the network, the optimal $O(\omega_f(O(W^{-2/\nu})))$ rate necessarily requires a discontinuous weight assignment. All this allows us to regard these two kinds of approximations as being in different ``phases''. In addition, we explore the intermediate rates $O(\omega_f(O(W^{-p})))$ with $p\in(\frac{1}{\nu},\frac{2}{\nu})$ and show that they are also in the discontinuous phase and require network depths $\sim W^{p\nu-1}.$ We show that the optimal rate $O(\omega_f(O(W^{-2/\nu})))$ can be achieved with a deep constant-width fully-connected architecture, whereas the rates $O(\omega_f(O(W^{-p})))$ with $p\in(\frac{1}{\nu},\frac{2}{\nu})$ and depth $O(W^{p\nu-1})$ can be achieved by stacking the deep constant-width architecture with a shallow parallel architecture.  Apart from the bit extraction technique, we use the idea of the two-scales expansion from Theorem 2 in \cite{yarotsky2017nn} as an essential tool in the proofs of our results.

We formulate precisely the results in Section  \ref{sec:result}, discuss them in Section \ref{sec:discus}, and give the proofs in Sections \ref{sec:proof1}, \ref{sec:proofmain}. 

\section{The results}\label{sec:result}
We define the modulus of continuity $\omega_f$ of a function $f:[0,1]^\nu\to\mathbb R$ by \begin{equation}\label{eq:omega}\omega_f(r)=\max\{|f(\mathbf x)-f(\mathbf y)|: \mathbf x, \mathbf y\in [0,1]^\nu, |\mathbf x-\mathbf y|\le r\},\end{equation}
where $|\mathbf x|$ is the euclidean norm of $\mathbf x$. 

\begin{figure}
\begin{center}
\includegraphics[clip,trim=12mm 8mm 12mm 8mm, scale=0.5]{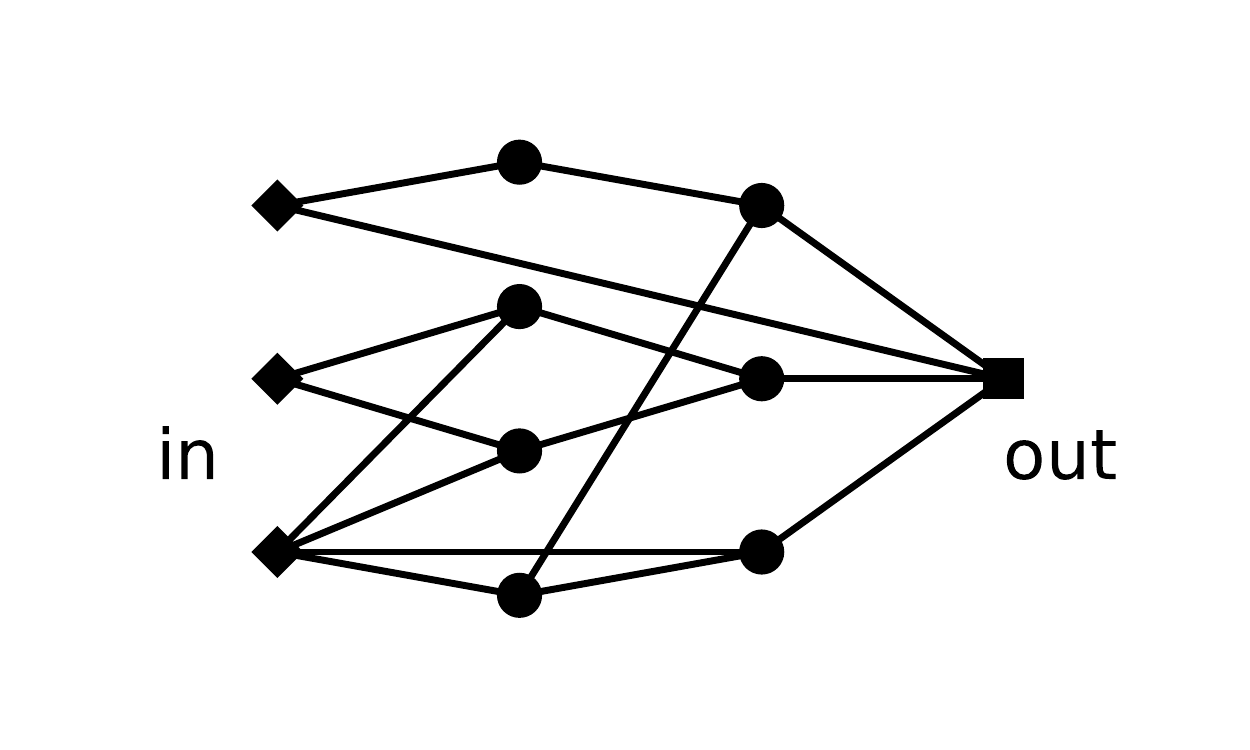}
\caption{An example of a feed-forward network architecture of depth $L=2$ with $W=24$ weights.}\label{fig:net}
\end{center}
\end{figure}

We approximate functions $f:[0,1]^\nu\to\mathbb R$ by usual feed-forward neural networks with the ReLU activation function $x\mapsto x_+\equiv \max(0,x)$.  The network has $\nu$ input units, some hidden units, and one output unit. The hidden units are assumed to be grouped in a sequence of layers so that the inputs of each unit is formed by outputs of some units from previous layers. The depth $L$ of the network is the number of these hidden layers. A hidden unit computes a linear combination of its inputs followed by the activation function: $x_1,\ldots,x_s\mapsto (\sum_{k=1}^s w_k x_k+h)_+$, where $w_k$ and $h$ are the weights associated with this unit. The output unit acts similarly, but without the activation function: $x_1,\ldots,x_s\mapsto \sum_{k=1}^s w_k x_k+h$. 

The network is determined by its architecture and weights. Clearly, the total number of weights, denoted by $W$, is equal to the total number of connections and computation units (not counting the input units). We don't impose any constraints on the network architecture (see Fig. \ref{fig:net} for an example of a valid architecture). 

Throughout the paper, we consider the input dimension $\nu$ as fixed. Accordingly, by \emph{constants} we will generally mean values that may depend on $\nu$.

We are interested in relating the approximation errors to the complexity of the function $f$, measured by its modulus of continuity $\omega_f$, and to the complexity of the approximating network, measured by its total number of weights $W$. More precisely, we consider  approximation rates in terms of the following procedure. 

First, suppose that for each $W$ we choose in some way a network architecture $\eta_W$ with $\nu$ inputs and $W$ weights. Then, for any $f:[0,1]^\nu\to\mathbb R$ we construct an approximation $\widetilde f_W:[0,1]^\nu\to \mathbb R$ to $f$ by choosing in some way the values of the weights in the architecture $\eta_W$  -- in the sequel, we refer to this stage as the \emph{weight assignment}. The question we ask is this: for which powers $p\in\mathbb R$ can we ensure, by the choice of the architecture and then the weights, that
\begin{equation}\label{eq:mainineq}
\|f-\widetilde f_W\|_\infty\le a\omega_f(c W^{-p}), \quad \forall f\in C([0,1]^\nu),
\end{equation}
with some constants $a, c$ possibly depending on $\nu$ and $p$ but not on $W$ or $f$?

Clearly, if inequality \eqref{eq:mainineq} holds for some $p$, then it also holds for any smaller $p$. However, we expect that for smaller $p$ the inequality can be in some sense easier to satisfy. In this paper we show that there is in fact a  \emph{qualitative} difference between different regions of $p$'s. 

\begin{figure}
\begin{center}
\includegraphics[clip,trim=20mm 30mm 30mm 35mm, scale=0.8]{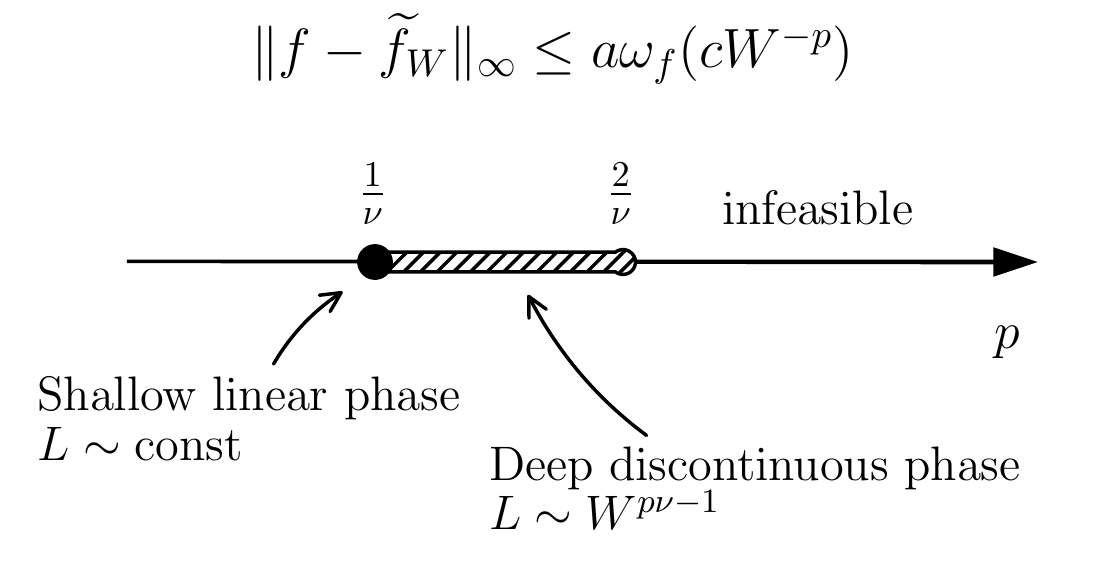}
\caption{The phase diagram of convergence rates. At $p=\frac{1}{\nu}$ the rate is achieved by shallow networks with weights linearly (and continuously) depending on $f$. At $p\in(\frac{1}{\nu},\frac{2}{\nu}]$, the rate is achieved by deep networks with weights discontinuously depending on $f$. Rates with $p>\frac{2}{\nu}$ are infeasible.}\label{fig:phasediag}
\end{center}
\end{figure}

Our findings are best summarized by the phase diagram shown in Fig. \ref{fig:phasediag}. We give an informal overview of the diagram before moving on to precise statements. The region of generally feasible rates is $p\le \frac{2}{\nu}.$ This region includes two qualitatively distinct phases corresponding to $p=\frac{1}{\nu}$ and $p\in (\frac{1}{\nu}, \frac{2}{\nu}]$. At $p=\frac{1}{\nu},$ the rate \eqref{eq:mainineq} can be achieved by fixed-depth networks whose weights depend linearly on the approximated function $f$. In contrast, at $p\in (\frac{1}{\nu}, \frac{2}{\nu}]$ the rate can only be achieved by networks with growing depths $L\sim W^{p\nu-1}$ and whose weights depend \emph{discontinuously} on the approximated function. In particular, at the rightmost feasible point $p=\frac{2}{\nu}$ the approximating architectures have $L\sim W$ and are thus necessarily extremely deep and narrow. 

\begin{figure}[t]
\begin{center}
\includegraphics[clip,trim=5mm 10mm 0mm 10mm, scale=0.8]{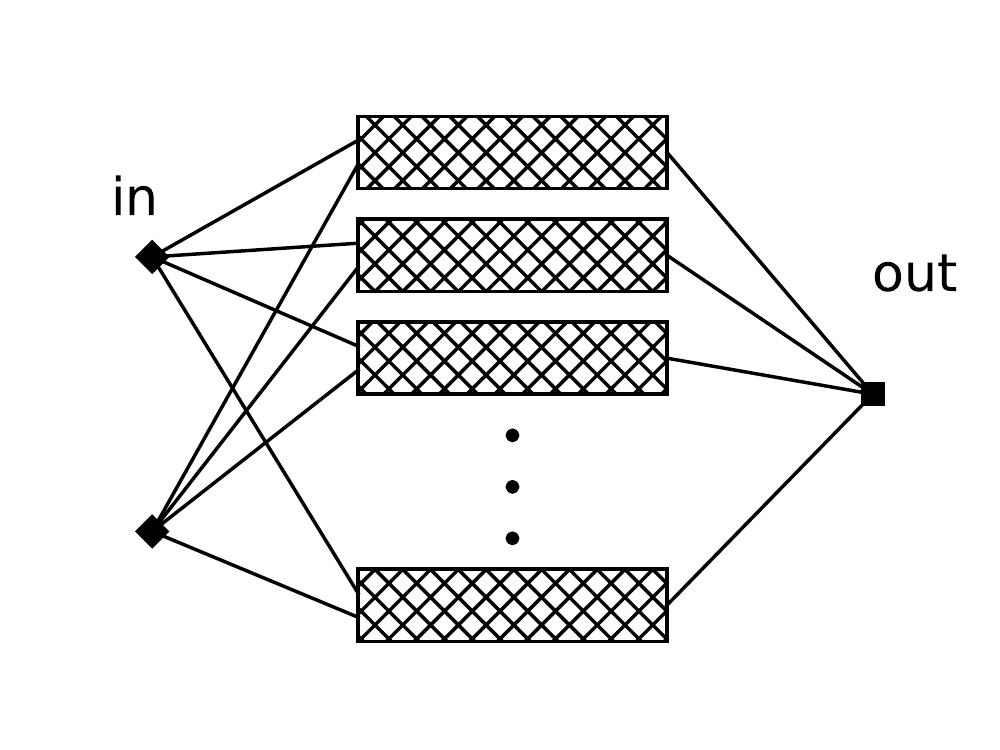}
\caption{The parallel, constant-depth network architecture implementing piecewise linear interpolation and ensuring approximation rate \eqref{eq:mainineq} with $p=\frac{1}{\nu}$.}\label{fig:shallow}
\includegraphics[clip,trim=120mm 30mm 120mm 105mm, scale=0.15]{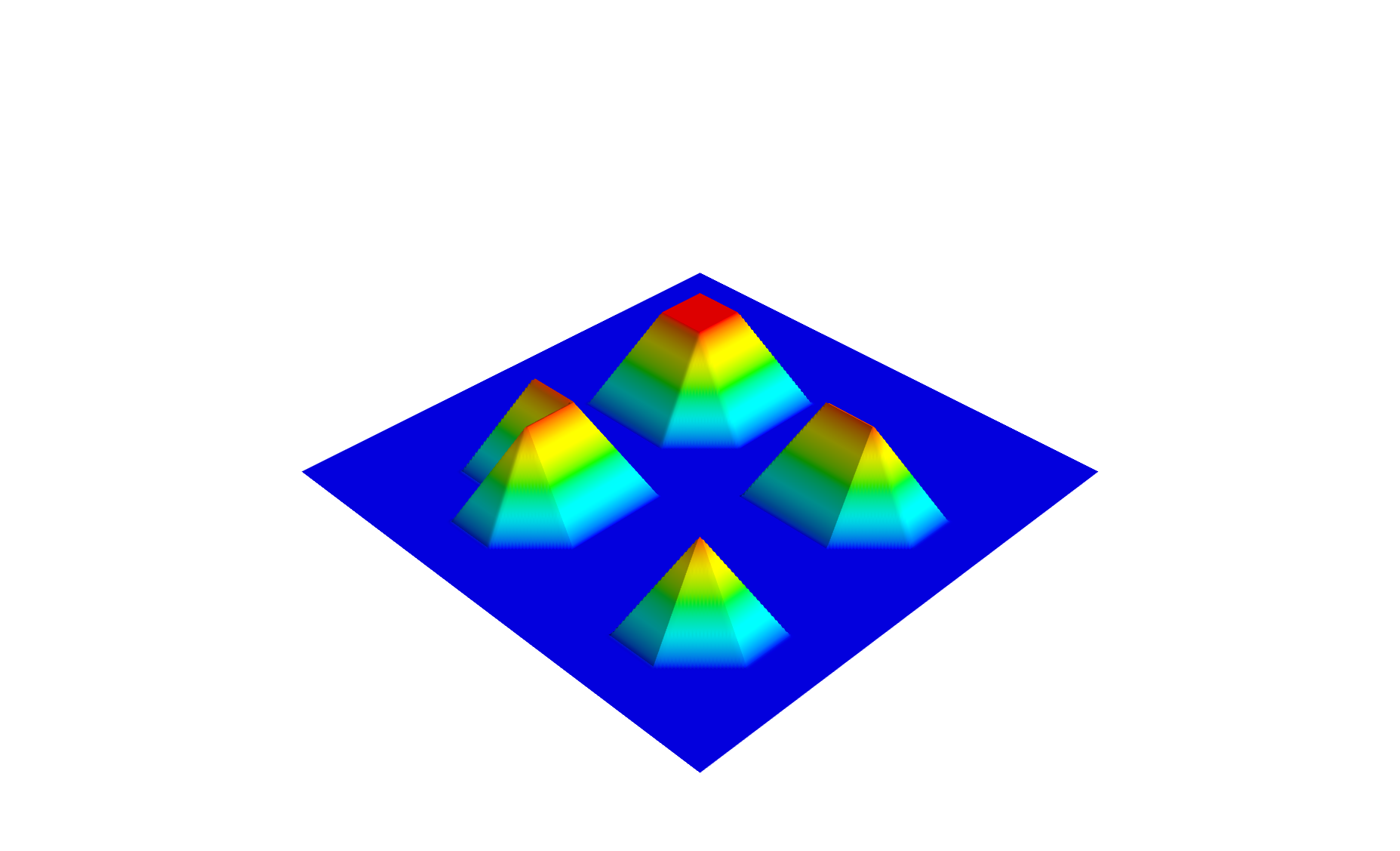}
\includegraphics[clip,trim=120mm 30mm 120mm 125mm, scale=0.15]{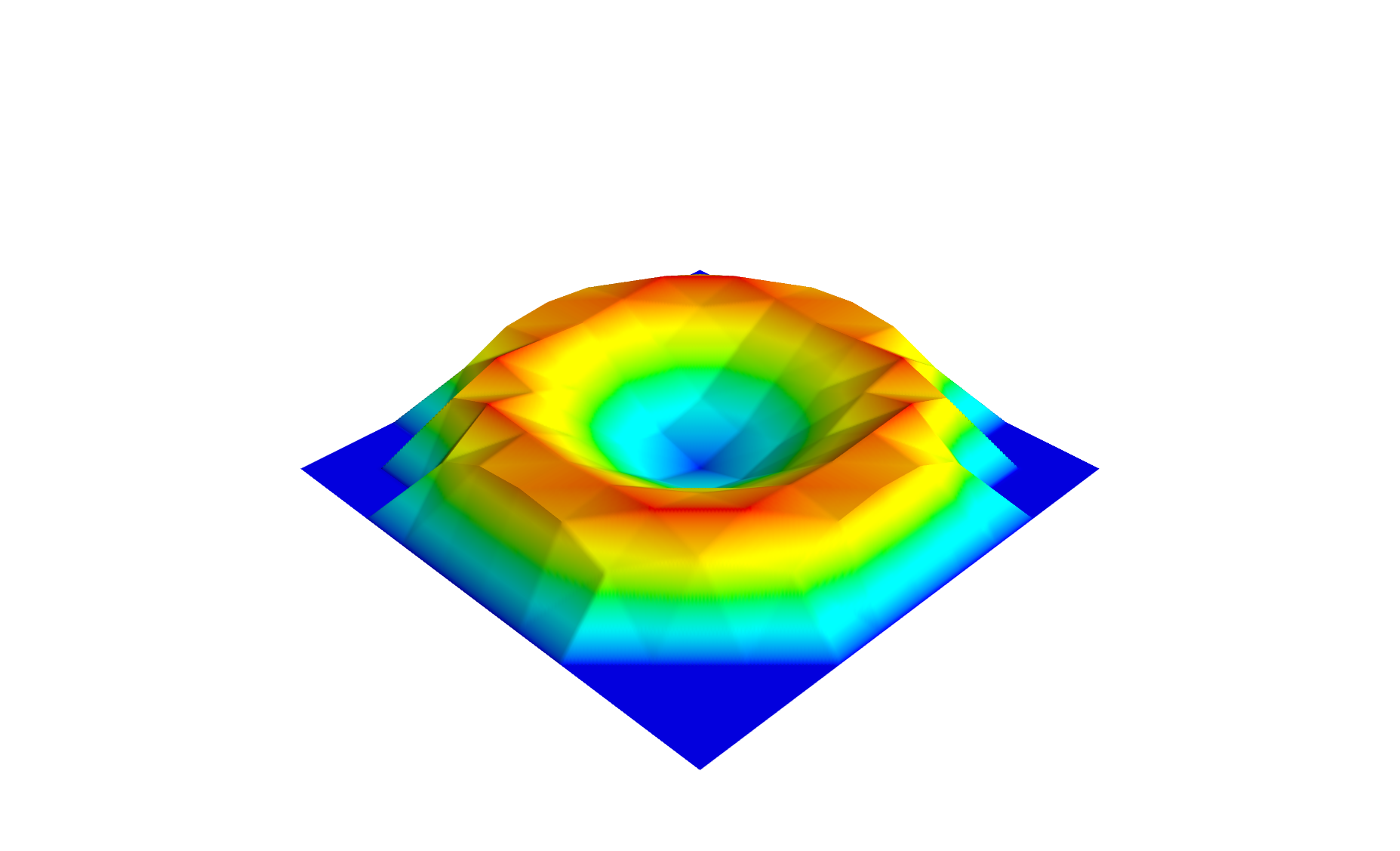}
\caption{Approximation by linear combinations of ``spike'' functions in dimension $\nu=2$. Left: a spike function and examples of sums of neighboring spike functions. Right: approximation of a radial function by a linear combination of spike functions.}\label{fig:spikes}
\end{center}
\end{figure}

We now turn to precise statements. First we characterize the $p=\frac{1}{\nu}$ phase in which the approximation can be obtained using a standard piecewise linear interpolation. In the sequel, when writing $f=O(g)$ we mean that $|f|\le c g$ with some constant $c$ that may depend on $\nu.$  For brevity, we will write $\widetilde f$ without the subscript $W$.  

\begin{prop}\label{prop:shallow} There exist network architectures $\eta_W$ with $W$ weights and, for each $W$, a weight assignment linear in $f$ such that Eq. \eqref{eq:mainineq} is satisfied with $p=\frac{1}{\nu}.$  
The network architectures can be chosen as consisting of $O(W)$ parallel blocks each having the same architecture that only depends on $\nu$ (see Fig. \ref{fig:shallow}). In particular, the depths of the networks depend on $\nu$ but not on $W$. 
\end{prop}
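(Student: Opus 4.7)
}

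The plan is to approximate $f$ by a piecewise linear interpolant on a regular grid of spacing $1/N$ in $[0,1]^\nu$, with $N=\Theta(W^{1/\nu})$, and to realize this interpolant as a network whose fixed part consists of $O(N^\nu)$ parallel copies of a constant-size ReLU block and whose $f$-dependent part is the final linear output layer. First I would fix a continuous piecewise-linear (CPWL) ``spike'' function $\phi:\mathbb R^\nu\to[0,1]$ that is compactly supported, takes the value $1$ at $\mathbf 0$ and $0$ at every other integer lattice point, and whose integer translates form a partition of unity: $\sum_{\mathbf k\in\mathbb Z^\nu}\phi(\mathbf x-\mathbf k)=1$. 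A convenient choice is the nodal basis function at $\mathbf 0$ of the Kuhn simplicial subdivision of $\mathbb R^\nu$: this function is concave on its (convex) support, namely the union of the $O(1)$ simplices touching $\mathbf 0$, and can be written explicitly as $\phi(\mathbf x)=\max\bigl(0,\min_i L_i(\mathbf x)\bigr)$ for a constant number of affine forms $L_i$.

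Next I would build the approximation
\[\widetilde f(\mathbf x)=\sum_{\mathbf k}f(\mathbf k/N)\,\phi(N\mathbf x-\mathbf k),\]
where $\mathbf k$ ranges over a constant-thickness enlargement of $\{0,\ldots,N\}^\nu$ and we set $f(\mathbf k/N):=f(\pi(\mathbf k/N))$, with $\pi$ the nearest-point projection onto $[0,1]^\nu$ (so for each $\mathbf k$ this is a fixed linear functional of $f$). Since $\phi$ is CPWL with $O(1)$ pieces, it is realized exactly by a ReLU subnetwork of depth and size $O(1)$; this is the common architecture of the $O(N^\nu)$ parallel blocks depicted in Fig.~\ref{fig:shallow}, the only difference between blocks being the first-layer biases that implement the shift $N\mathbf x\mapsto N\mathbf x-\mathbf k$. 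The output layer takes a linear combination with coefficients $f(\mathbf k/N)$, so the total weight count is $O(N^\nu)=O(W)$ and the weight assignment is linear in $f$.

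Finally, the error estimate comes directly from partition of unity. Writing
\[f(\mathbf x)-\widetilde f(\mathbf x)=\sum_{\mathbf k}\bigl(f(\mathbf x)-f(\pi(\mathbf k/N))\bigr)\,\phi(N\mathbf x-\mathbf k),\]
only the $O(1)$ terms with $\mathbf k$ near $N\mathbf x$ are nonzero, and each such $\pi(\mathbf k/N)$ lies within Euclidean distance $O(1/N)$ of $\mathbf x$ (the projection being $1$-Lipschitz). Bounding each difference by $\omega_f(O(1/N))$ and using $\sum_{\mathbf k}\phi(N\mathbf x-\mathbf k)=1$ with $\phi\ge 0$ gives $\|f-\widetilde f\|_\infty\le a\,\omega_f(cW^{-1/\nu})$ as claimed.

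The main obstacle, and essentially the only nontrivial step, is the construction of a compactly supported CPWL spike function satisfying the partition-of-unity identity in arbitrary dimension $\nu$ while admitting an explicit constant-size ReLU realization. Naive pyramid-like choices such as $\max(0,1-\|\mathbf x\|_\infty)$ have the right support and peak value but fail partition of unity once $\nu>1$. The Kuhn basis function works, but one must verify by direct computation that on each simplex touching $\mathbf 0$ the function agrees with one of the supporting affine forms $L_i$, so that the global formula $\phi=\max(0,\min_i L_i)$ is correct and translates sum to $1$.
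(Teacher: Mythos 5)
Your construction is correct and essentially the same as the paper's: the same Kuhn-triangulation spike function $\phi=\bigl(\min_{\Delta}l_\Delta\bigr)_+$, the same interpolant $\sum_{\mathbf n}f(\tfrac{\mathbf n}{N})\phi(N\mathbf x-\mathbf n)$ with $N\sim W^{1/\nu}$, and the same architecture of $O(N^\nu)$ identical parallel blocks feeding a linear output layer whose weights $f(\tfrac{\mathbf n}{N})$ depend linearly on $f$. The only difference is in the error estimate, and it is cosmetic: you invoke the partition-of-unity identity directly, whereas the paper bounds $\||\nabla \widetilde f_1|\|_\infty$ and hence the modulus of continuity of the discrepancy $f-\widetilde f_1$ (estimates it needs again later in the proof of Theorem \ref{th:main}); both routes give $\|f-\widetilde f\|_\infty= O(\omega_f(O(1/N)))$.
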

The detailed proof is given in Section  \ref{sec:proofshallow}; we explain now the idea. The approximating function $\widetilde f$ is constructed as a linear combination of ``spike'' functions sitting at the knots of the regular grid in $[0,1]^\nu$, with coefficients given by the values of $f$ at these knots (see Fig. \ref{fig:spikes}). For a grid of spacing $\frac{1}{N}$ with an appropriate $N$, the number of knots is $\sim N^\nu$ while the approximation error is $O( \omega_f(O(\frac{1}{N}))).$ We implement each spike by a block in the network, and implement the whole approximation by summing blocks connected in parallel and weighted. Then the whole network has $O(N^\nu)$ weights and, by expressing $N$ as $\sim W^{1/\nu}$, the approximation error is $O(\omega_f(O(W^{-1/\nu}))$, i.e. we obtain the rate \eqref{eq:mainineq} with $p=\frac{1}{\nu}$.

We note that the weights of the resulting network either do not depend on $f$ at all or are given by $w=f(\mathbf x)$ with some $\mathbf x\in [0,1]^\nu.$ In particular, the weight assignment is continuous in $f$ with respect to the standard topology of $C([0,1]^\nu)$.   

We turn now to the region $p>\frac{1}{\nu}.$ Several properties of this region are either direct consequences or slight modifications of existing results, and it is convenient to combine them in a single theorem.

\begin{theorem}\label{th:2}{}\hfill
\begin{enumerate}
\item[a)] (Feasibility) Approximation rate \eqref{eq:mainineq} cannot be achieved with $p>\frac{2}{\nu}$.
\item[b)] (Inherent discontinuity) Approximation rate \eqref{eq:mainineq} cannot be achieved with $p>\frac{1}{\nu}$ if the weights of $\widetilde f$ are required to depend on $f$ continuously with respect to the standard topology of $C([0,1]^\nu)$.
\item[c)] (Inherent depth) If approximation rate \eqref{eq:mainineq} is achieved with a particular $p\in (\frac{1}{\nu},\frac{2}{\nu}]$, then the architectures $\eta_W$ must have depths $L\ge d W^{p\nu-1}/\ln W$  with some possibly $\nu$- and $p$-dependent constant $d>0.$
\end{enumerate}
\end{theorem}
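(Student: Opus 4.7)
The three parts split into two proof strategies: parts (a) and (c) I plan to handle by a common sign-pattern / VC-dimension argument using two different VC upper bounds, while (b) is a one-step reduction to the continuous nonlinear width lower bound of DeVore, Howard and Micchelli (1989).

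For (a) and (c), I plan to probe the architecture $\eta_W$ with a family of $1$-Lipschitz test functions indexed by sign vectors. Fix $N$ and let $\phi_{\mathbf k}$, for $\mathbf k\in\{0,\ldots,N-1\}^\nu$, be a nonnegative $1$-Lipschitz ``cone'' bump of height $\asymp 1/N$ supported in a ball of radius $\asymp 1/N$ around $\mathbf k/N$, chosen so that distinct $\phi_{\mathbf k}$ have disjoint supports. For each sign vector $\mathbf s\in\{-1,+1\}^{N^\nu}$, set $f_{\mathbf s}=\sum_{\mathbf k} s_{\mathbf k}\phi_{\mathbf k}$. Disjointness of supports keeps $f_{\mathbf s}$ globally $1$-Lipschitz, so $\omega_{f_{\mathbf s}}(r)\le r$, and the hypothesized rate \eqref{eq:mainineq} yields $\|f_{\mathbf s}-\widetilde f_{\mathbf s}\|_\infty\le acW^{-p}$. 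Choosing $N$ as the largest integer for which $acW^{-p}$ is less than half the bump height (so $N\asymp W^p$), the sign of $\widetilde f_{\mathbf s}$ at each grid node $\mathbf k/N$ must match $s_{\mathbf k}$. This exhibits all $2^{N^\nu}$ labelings of the $N^\nu$ grid nodes in the thresholded-network class $\{\mathbf{1}[\widetilde f_W(\cdot)>0]\}$ associated with $\eta_W$, forcing $\operatorname{VCdim}(\eta_W)\ge N^\nu\asymp W^{p\nu}$.

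For (a) I would combine this lower bound with the Goldberg--Jerrum bound $\operatorname{VCdim}\le CW^2$ already used in Theorem 4 of \cite{yarotsky2017nn}: the inequality $W^{p\nu}\lesssim W^2$ for arbitrarily large $W$ forces $p\le 2/\nu$. For (c) I would substitute the sharper depth-aware bound $\operatorname{VCdim}\le CWL\ln W$ of \cite{bartlett2017nearly}, which rearranges directly to $L\ge cW^{p\nu-1}/\ln W$. Part (b) I would settle in one step via \cite{devore1989optimal}: any $W$-parameter approximation scheme whose parameter selection is continuous in the $C([0,1]^\nu)$-topology cannot beat the rate $W^{-1/\nu}$ on the Lipschitz unit ball in $\mathcal W^{1,\infty}([0,1]^\nu)$. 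A continuous weight assignment $f\mapsto\mathrm{weights}(f)$ into a fixed architecture $\eta_W$ is precisely such a scheme, and applying \eqref{eq:mainineq} to a $1$-Lipschitz $f$ gives rate $W^{-p}$; hence $p\le 1/\nu$.

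The main care point I anticipate is the sign-recovery calibration in (a) and (c): the bump height, the bump support radius, and the rate constants $a,c$ must be lined up so that at $N\asymp W^p$ the error $acW^{-p}$ is genuinely below half the bump height uniformly in $W$. Once this calibration is in place, everything else reduces to bookkeeping between $N$, $W$, and the two VC upper bounds.
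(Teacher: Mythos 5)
Your proposal is correct and follows essentially the same route as the paper: part (b) via the continuous nonlinear-width bound of DeVore et al., and parts (a) and (c) via a VC-dimension lower bound $\gtrsim W^{p\nu}$ from sign-pattern trial functions combined with the Goldberg--Jerrum $O(W^2)$ and Bartlett et al.\ $O(WL\ln W)$ upper bounds, respectively. The only difference is presentational: the paper delegates the bump-function shattering argument to Theorem 4 (Eq.\ (38)) of \cite{yarotsky2017nn}, whereas you spell it out explicitly, with the calibration $N\asymp W^p$ exactly as in that reference.
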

\begin{proof} 
The proofs of these statements have the common element of considering the approximation for functions from the unit ball $F_{\nu,1}$ in the  Sobolev space  $\mathcal W^{1,\infty}([0,1]^\nu)$  of Lipschitz functions. Namely, suppose that the approximation rate \eqref{eq:mainineq} holds with some $p$. Then all $f\in F_{\nu,1}$ can be approximated by architectures $\eta_W$ with accuracy \begin{equation}\label{eq:epsw}\epsilon_W= c_1 W^{-p}\end{equation} with some constant $c_1$ independent of $W$. The three statements of the theorem are then obtained as follows.

\medskip\noindent
a) This statement is a consequence of Theorem 4a) of \cite{yarotsky2017nn}, which is in turn a consequence of the upper bound $O(W^2)$ for the VC dimension of a ReLU network (\cite{goldberg1995bounding}). Precisely, Theorem 4a) implies that if an architecture $\eta_W$ allows to approximate all $f\in F_{\nu,1}$ with accuracy $\epsilon_W$, then $W\ge c_2\epsilon_W^{-\nu/2}$ with some $c_2$. Comparing this with Eq. \eqref{eq:epsw}, we get $p\le \frac{2}{\nu}.$

\medskip\noindent 
b) This statement is a consequence of the general bound of \cite{devore1989optimal} on the efficiency of approximation of Sobolev balls with parametric models having parameters continuously depending on the approximated function. Namely, if the weights of the networks $\eta_W$ depend on $f\in F_{\nu,1}$ continuously, then Theorem 4.2 of \cite{devore1989optimal} implies that $\epsilon_W\ge c_2 W^{-1/\nu} $ with some constant $c_2$, which implies that $p\le\frac{1}{\nu}.$

\medskip\noindent 
c) This statement can be obtained by combining arguments of Theorem 4 of \cite{yarotsky2017nn} with the recently established tight upper bound for the VC dimension of ReLU networks (\cite{bartlett2017nearly}, Theorem 6) with given depth $L$ and the number of weights $W$: \begin{equation}\label{eq:vcub}\operatorname{VCdim}(W,L)\le CWL\ln W,\end{equation} where $C$ is a global constant. 

Specifically, suppose that an architecture $\eta_W$ allows to approximate all $f\in F_{\nu,1}$ with accuracy $\epsilon_W$. Then, by considering suitable trial functions, one shows that if we threshold the network output, the resulting networks must have VC dimension $\mathrm{VCdim}(\eta_W)\ge c_2\epsilon_W^{-\nu}$ (see Eq.(38) in \cite{yarotsky2017nn}). Hence, by Eq. \eqref{eq:epsw}, $\mathrm{VCdim}(\eta_W)\ge c_3 W^{p\nu}$. On the other hand,  the upper bound \eqref{eq:vcub} implies $\mathrm{VCdim}(\eta_W)\le CWL\ln W$. We conclude that $c_3W^{p\nu}\le CWL\ln W$, i.e. $L\ge dW^{p\nu-1}/\ln W$ with some constant $d$. 
\end{proof}

Theorem \ref{th:2} suggests the existence of an approximation phase drastically different from the phase $p=\frac{1}{\nu}$. This new phase would provide better approximation rates, up to $p=\frac{2}{\nu}$, at the cost of deeper networks and some complex, discontinuous weight assignment. The main contribution of the present paper is the proof that this phase indeed exists. 

We describe some architectures that, as we will show, correspond to this phase. First we describe the architecture for $p=\frac{2}{\nu},$ i.e. for the fastest possible approximation rate. Consider the usual fully-connected architecture connecting neighboring layers and having a constant number of neurons in each layer, see Fig. \ref{fig:stnet}. We refer to this constant number of neurons as the  ``width'' $H$ of the network. Such a network of width $H$ and depth $L$ has $W=L(H^2+H)+H^2+(\nu+1)H+1$ weights in total. We will be interested in the scenario of ``narrow'' networks where $H$ is fixed and the network grows by increasing $L$; then $W$ grows linearly with $L$. Below we will refer to the ``narrow fully-connected architecture of width $H$ having $W$ weights'': the depth $L$ is supposed in this case to be determined from the above equality; we will assume without loss of generality that  the equality is satisfied with an integer $L$. We will show that these narrow architectures provide the $p=\frac{2}{\nu}$ approximation rate if the width $H$ is large enough (say, $H=2\nu+10$).   

In the case $p\in (\frac{1}{\nu},\frac{2}{\nu})$ we consider another kind of architectures obtained by stacking parallel shallow architectures (akin to those of Proposition \ref{prop:shallow}) with the above narrow fully-connected architectures, see Fig. \ref{fig:stackednet}. The first, parallelized part of these architectures consists of blocks that only depend on $\nu$ (but not on $W$ or $p$). The second, narrow fully-connected part will again have a fixed width,  and we will take its depth to be $\sim W^{p\nu-1}$. All the remaining weights then go into the first parallel subnetwork, which in particular determines the number of blocks in it. Since the blocks are parallel and their architectures do not depend on $W$, the overall depth of the network is determined by the second, deep subnetwork and is $O(W^{p\nu-1})$. On the other hand, in terms of the number of weights, for $p<\frac{2}{\nu}$ most computation is performed by the first, parallel subnetwork (the deep subnetwork has $O(W^{p\nu-1})$ weights while the parallel one has an asymptotically larger number of weights, $W-O(W^{p\nu-1})$). 

Clearly, these stacked architectures can be said to ``interpolate'' between the purely parallel architectures for $p=\frac{1}{\nu}$ and the purely serial architectures for $p=\frac{2}{\nu}$. Note that a parallel computation can be converted into a serial one at the cost of increasing the depth of the network. For $p<\frac{2}{\nu}$, rearranging the parallel subnetwork of the stacked architecture into a serial one would destroy the $O(W^{p\nu-1})$ bound on the depth of the full network, since the parallel subnetwork has $\sim W$ weights. However, for $p=\frac{2}{\nu}$ this rearrangement does not affect the $L\sim  W$ asymptotic of the depth more than by a constant factor  -- that's why we don't include the parallel subnetwork into the full network in this case.  

\begin{figure}[t]
\begin{center}
\includegraphics[clip,trim=15mm 5mm 5mm 5mm, scale=0.8]{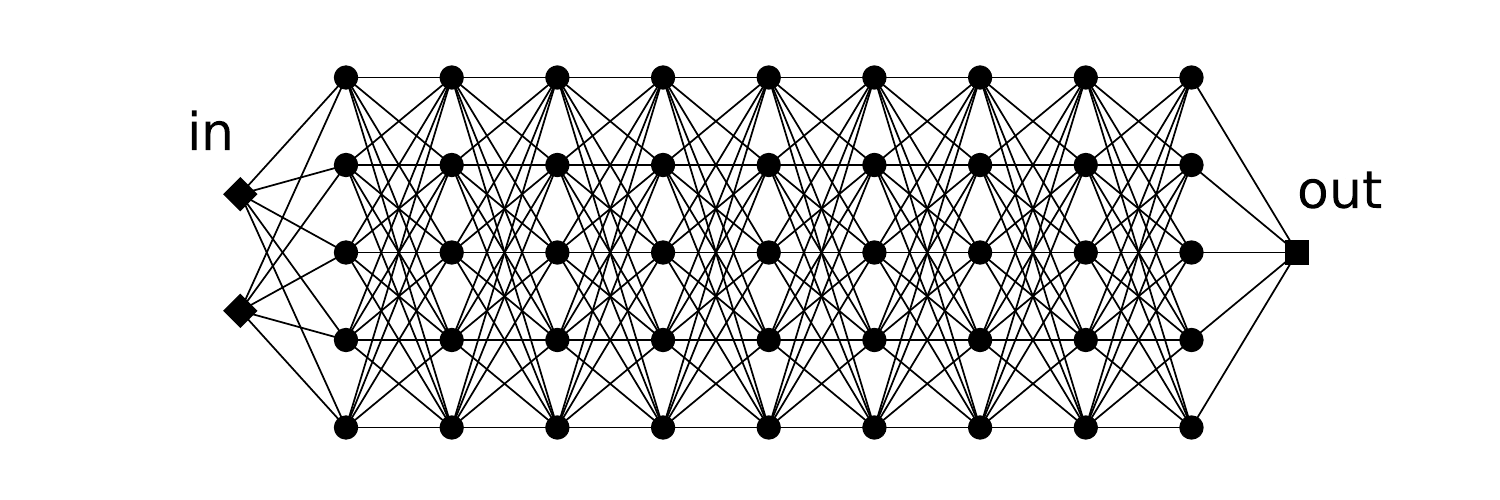}
\caption{An example of ``narrow'' fully-connected network architecture having $\nu=2$ inputs,   depth $L=9$ and width $H=5$. These architectures provide the optimal approximation rate \eqref{eq:mainineq} with $p=\frac{2}{\nu}$ if $H$ is sufficienly large and held constant while $L$ is increased.}\label{fig:stnet}
\includegraphics[clip,trim=15mm 5mm 5mm 5mm, scale=0.8]{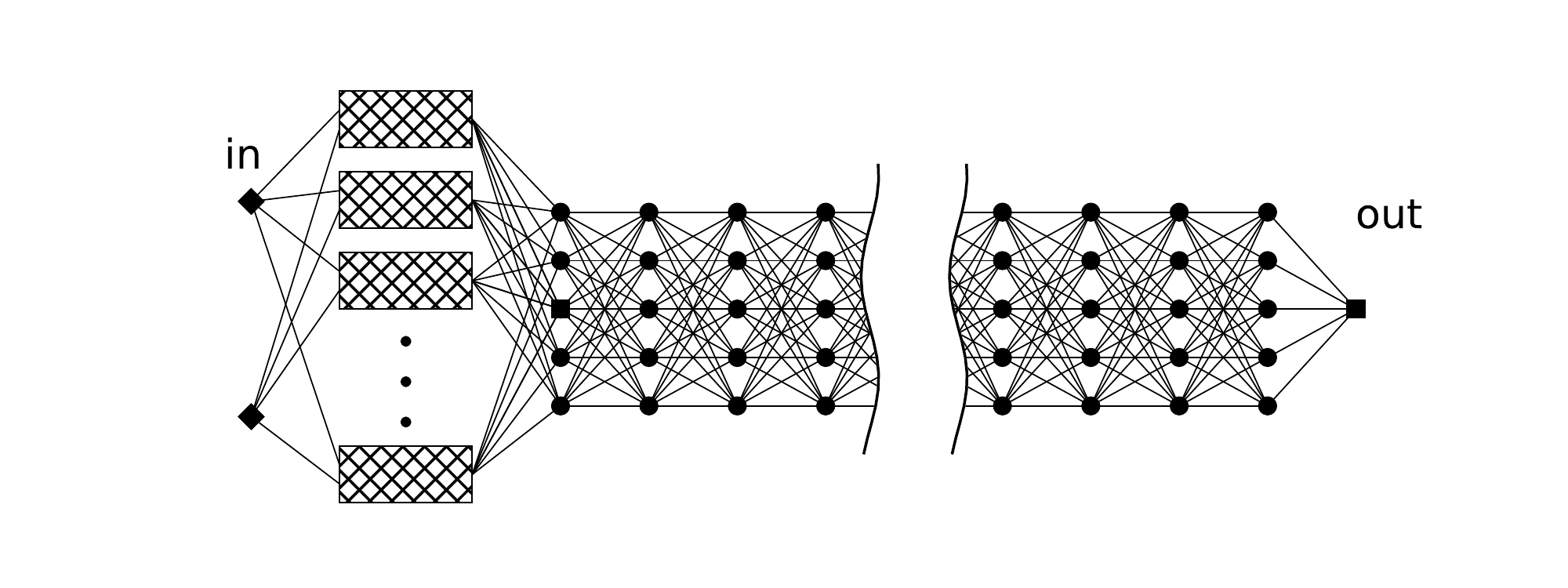}
\caption{The ``stacked'' architectures for $p\in(\frac{1}{\nu},\frac{2}{\nu})$, providing the optimal approximation rates \eqref{eq:mainineq} under the depth constraint $L=O(W^{p\nu-1})$.}\label{fig:stackednet}
\end{center}
\end{figure}

We state now our main result as the following theorem.

\begin{theorem}\label{th:main}{}\hfill
\begin{enumerate}
\item[a)] 
For any $p\in (\frac{1}{\nu},\frac{2}{\nu}]$, there exist a sequence of architectures $\eta_W$ with depths $L=O(W^{p\nu-1})$ and respective weight assignments such that inequality \eqref{eq:mainineq} holds with this $p$. 
\item[b)] For $p=\frac{2}{\nu}$, an example of such architectures is the  narrow fully-connected architectures of constant width $2\nu+10$.
\item[c)] For $p\in(\frac{1}{\nu},\frac{2}{\nu})$, an example of such architectures are stacked architectures described above, with the narrow fully-connected subnetwork having width $3^\nu(2\nu+10)$ and depth $W^{p\nu-1}$.
\end{enumerate}
\end{theorem}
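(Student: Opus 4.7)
The plan is to combine the bit-extraction gadget of \cite{bartlett1998almost,bartlett2017nearly} with a two-scales decomposition in the spirit of Theorem~2 of \cite{yarotsky2017nn}. Bit extraction provides a narrow ReLU subnetwork whose depth $L$ is spent extracting any one of $\sim L$ bits packed into the binary expansion of a small number of ``storage'' weights, realising the $\mathrm{VCdim}\sim WL\ln W$ scaling that drove the lower bound in Theorem~\ref{th:2}(c). The two-scales decomposition is what converts this combinatorial device into a sup-norm approximation statement: a coarse shallow interpolant takes out most of the amplitude of $f$, so that only a small-amplitude residual has to be encoded via bit extraction and the number of required bits per cell fits into the available capacity.

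For both parts I would write $f=f_1+f_2$, where $f_1$ is the piecewise-linear interpolant of $f$ on the coarse grid of spacing $\sim W^{-1/\nu}$ built in Proposition~\ref{prop:shallow}, so that $\|f_2\|_\infty=O(\omega_f(W^{-1/\nu}))$. In part (c) $f_1$ is realised by the parallel shallow subnetwork of the stacked architecture (Fig.~\ref{fig:stackednet}), which absorbs almost all of the $W$ weights; in part (b) the same shallow piece is serialised into the first $O(W)$ layers of the narrow fully-connected network, which is possible because each of its $\sim W$ parallel blocks has constant width and constant depth. The residual $f_2$ is then discretised on the fine grid of spacing $\sim W^{-p}$ (giving $\sim W^{p\nu}$ cells), its cell values are quantised to a bit-depth matched to the remaining capacity, and the resulting bits are packed into the binary expansions of a constant number of further ``storage'' weights. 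A narrow serial subnetwork of depth $\sim W^{p\nu-1}$ decodes the relevant value: it (i) turns the input $\mathbf{x}$ into a fine-cell integer index by ReLU-based digit extraction on each coordinate, (ii) iterates the shift-and-extract gadget of \cite{bartlett2017nearly} to step through the packed bit string to the chosen index, and (iii) accumulates the decoded digits into the output, which is added to $f_1(\mathbf{x})$. A careful channel count -- $\nu$ for the propagated $\mathbf{x}$, $\nu$ for the running cell index (in part (c), replaced by $3^\nu$ copies, one per neighbouring coarse cell, to make the decoded output continuous across coarse-cell boundaries), plus a bounded workspace for the bit-extraction state and accumulator -- gives the advertised widths $2\nu+10$ in part (b) and $3^\nu(2\nu+10)$ in part (c), and the depth of the combined network is dominated by the bit-extraction part, namely $O(W^{p\nu-1})$.

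The main obstacle is upgrading the bit-extraction construction from a shattering-type statement (values prescribed at a discrete set of points) to a genuine sup-norm approximation: the shift-and-extract ReLU gadget is not piecewise constant on each cell but has thin transition strips of width $\ll W^{-p}$ near cell boundaries where its output is neither the ``old'' nor the ``new'' decoded bit. These transitions must be designed so that the resulting piecewise-linear artefacts contribute at most $O(\omega_f(W^{-p}))$ to the sup-norm error, not merely vanish at cell centres; this is where the two-scales amplitude bound $\|f_2\|_\infty=O(\omega_f(W^{-1/\nu}))$ is used to show that even a transition that temporarily misdecodes a bit can only cause error of the target order. A companion bookkeeping task is to verify that the nominal widths $2\nu+10$ and $3^\nu(2\nu+10)$ actually suffice once input, index, shift-register, and accumulator channels are all present simultaneously in the narrow subnetwork, and that the counted number of weights really telescopes to $W$ with the correct asymptotic balance between the coarse and fine subnetworks.
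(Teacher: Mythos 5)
Your high-level plan (coarse interpolant plus bit-extraction encoding of the residual) is the same as the paper's, but three specific choices in your plan would fail. First, the storage layout: you pack all $\sim W^{p\nu}$ quantised cell values into ``a constant number of storage weights'' and then ``step through the packed bit string to the chosen index.'' Extracting the $k$-th digit of a real weight by iterated shift-and-extract costs depth $\sim k$ (each shift is one layer, and a constant-width ReLU network of depth $L$ cannot produce the $\sim 2^k$ oscillations of the $k$-th-bit function unless $L\gtrsim k$), so reaching index $\sim W^{p\nu}$ forces depth $\sim W^{p\nu}$, not $W^{p\nu-1}$; for $p=2/\nu$ this even exceeds the total weight budget. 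The correct balance --- and the whole point of the $\mathrm{VCdim}\sim WL$ tradeoff --- is to use $\sim N^\nu\sim W$ storage weights, one per \emph{coarse} patch, each holding only the $\sim(M/N)^\nu\sim W^{p\nu-1}$ digits describing the residual inside that patch; the relevant weight is selected in $O(1)$ depth by a shallow parallel gadget, and only the within-patch extraction is serial. Second, the digit budget: quantising the raw cell values of $f_2$ to resolution $\omega_f(W^{-p})$ requires $\sim\log_2\bigl(\omega_f(W^{-1/\nu})/\omega_f(W^{-p})\bigr)=O(\ln W)$ bits per cell, which inflates the per-patch digit count, hence the decoding depth and the deep subnetwork's weight count, by a factor $\ln W$ --- fatal at $p=2/\nu$, where it pushes the weights above $W$. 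The paper avoids this by storing the \emph{first differences} of the quantised values, which lie in $\{-1,0,1\}$ precisely because $\omega_{f_{2,\mathbf q}}(1/M)$ is at most the quantisation step, and reconstructing by summation by parts; your proposal has no analogue of this step.

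Third, your resolution of the boundary-transition obstacle is incorrect as stated: you argue that a transiently misdecoded value ``can only cause error of the target order'' because $\|f_2\|_\infty=O(\omega_f(W^{-1/\nu}))$, but $\omega_f(W^{-1/\nu})$ exceeds the target $\omega_f(W^{-p})$ by a factor as large as $W^{p-1/\nu}$ for Lipschitz $f$, so an error of amplitude $\|f_2\|_\infty$ is \emph{not} of the target order. The paper instead arranges that no misdecoding ever contributes: the residual is split into $3^\nu$ pieces $f_{2,\mathbf q}$, each supported on a disjoint union of patches and vanishing on patch boundaries, the patch-selector is constant on each patch and only transitions in the gaps between same-$\mathbf q$ patches, and a suppressing function $\theta_{\mathbf q}$ clamps the decoded contribution to zero outside the patches --- so wherever garbage could be decoded, it is multiplied by zero. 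Your channel counts and the serialisation of the shallow part for $p=2/\nu$ are in the right spirit, but without these three corrections the construction does not meet the depth, weight, or accuracy claims of the theorem.
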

Comparing this theorem with Theorem \ref{th:2}a) we see that the narrow fully-connected architectures provide the best possible approximation in the sense of Eq. \eqref{eq:mainineq}. Moreover, for $p\in(\frac{1}{\nu}, \frac{2}{\nu})$ the upper bound on the network depth in Theorem \ref{th:main}c) matches the lower bound in Theorem \ref{th:2}c) up to a logarithmic factor. This proves that for $p\in(\frac{1}{\nu}, \frac{2}{\nu})$ our stacked architectures are also optimal (up to a logarithmic correction) if we additionally impose the asymptotic constraint $L=O(W^{p\nu-1})$ on the network depth.      

The full proof of Theorem \ref{th:main} is given in Section \ref{sec:proofmain}; we explain now its main idea. Given a function $f$ and some $W$, we first proceed as in Proposition \ref{prop:shallow} and construct its piecewise linear interpolation $\widetilde f_1$ on the length scale $\frac{1}{N}$ with $N\sim W^{1/\nu}$. This approximation has uniform error $O(\omega_f(O(W^{-1/\nu})))$. Then, we improve this approximation by constructing an additional approximation $\widetilde f_2$ for the discrepancy $f-\widetilde f_1$. This second approximation lives on a smaller length scale $\frac{1}{M}$ with $M\sim W^{p}$. In contrast to $\widetilde f_1$, the second approximation is inherently discrete: we consider a finite set of possible shapes of $f-\widetilde f_1$ in patches of linear size $\sim\frac{1}{N}$, and in each patch we use a special single network weight to encode the  shape closest to $f-\widetilde f_1$. The second approximation is then fully determined by the collection of these special encoding weights found for all patches. We make the  parallel subnetwork of the full network serve two purposes: in addition to computing the initial approximation $\widetilde f_1(\mathbf x)$ as in Proposition \ref{prop:shallow}, the subnetwork returns the position of $\mathbf x$ within its patch along with the weight that encodes the second approximation $\widetilde f_2$ within this patch. The remaining, deep narrow part of the network then serves to decode the second approximation within this patch from the special weight and compute the value $\widetilde f_2(\mathbf x)$. Since the second approximation lives on the smaller length scale $\frac{1}{M}$, there are $Z=\exp(O((M/N)^\nu))$ possible  approximations $\widetilde f_2$ within the patch that might need to be encoded in the special weight. It then takes a narrow network of depth $L\sim \ln Z$ to reconstruct the approximation from the special weight using the bit extraction technique of \cite{bartlett1998almost}. As $M\sim W^{p}$, we get $L\sim W^{p\nu-1}$. At the same time, the second approximation allows us to effectively improve the overall approximation scale from $\sim \frac{1}{N}$ down to $\sim \frac{1}{M}$, i.e. to $\sim W^{-p}$, while keeping the total number of weights in the network. This gives us the desired error bound $O(\omega_f(O(W^{-p}))).$ 

We remark that the discontinuity of the weight assignment in our construction is the result of the discreteness of the second approximation $\widetilde f_2$: whereas the variable weights in the network implementing the first approximation $\widetilde f_1$  are found by linearly projecting the approximated function to $\mathbb R$ (namely, by computing $f\mapsto f(\mathbf x)$ at the knots $\mathbf x$), the variable weights for $\widetilde f_2$ are found by assigning to $f$ one of the finitely many values encoding the possible approximate shapes in a patch. This operation is obviously discontinuous.  While the discontinuity is present for all $p>\frac{1}{\nu},$ at smaller $p$ it is ``milder'' in the sense of a smaller number of assignable values.

\section{Discussion}\label{sec:discus}

We discuss now our result in the context of general approximation theory and practical machine learning. First, a theorem of \cite{kainen1999approximation} shows that in the optimal approximations by neural networks the weights generally discontinuously depend on the approximated function, so the discontinuity property that we have established is not surprizing. However, this theorem of \cite{kainen1999approximation} does not in any way quantify the accuracy gain that can be acquired by giving up the continuity of the weights. Our result does this in the practically important case of deep ReLU networks, and explicitly describes a relevant mechanism.

In general, many nonlinear approximation schemes involve some form of discontinuity, often explicit (e.g., using different expansion bases for different approximated functions (\cite{devore1998nonlinear}). At the same time, discontinuous selection of parameters in parametric models is often perceived as an undesirable  phenomenon associated with unreliable approximation (\cite{devore1989optimal,devore1998nonlinear}). We point out, however, that deep  architectures considered in the present paper resemble some popular state-of-the-art practical networks for highly accurate image recognition -- residual networks \citep{he2016deep} and highway networks \citep{srivastava2015highway} that may have dozens or even hundreds of layers. While our model does not explicitly include shortcut connections as in ResNets, a very similar element is effectively present in the proof of Theorem \ref{th:main} (in the form of channels reserved for passing forward the data). We expect, therefore, that our result may help better understand the properties of ResNet-like networks. 

Quantized network weights have been previously considered from the information-theoretic point of view in \cite{bolcskei2017memory,petersen2017optimal}. In the present paper we do not use quantized weights in the statement of the approximation problem, but they appear in the solution (namely, we use them to store small-scale descriptions of the approximated function). One can expect that weight quantization may play an important role in the future development of the theory of deep networks.

\section{Preliminaries and proof of Proposition \ref{prop:shallow}}\label{sec:proof1}

\subsection{Preliminaries} 
The modulus of continuity defined by \eqref{eq:omega} is monotone nondecreasing in $r$. By the convexity of the cube $[0,1]^\nu,$ for any integer $n$ we have $\omega_f(r)\le n\omega_f(\frac{r}{n}).$ More generally, for any $\alpha\in (0,1)$ we can write \begin{equation}\label{eq:alphaomega}\alpha\omega_f(r)\le 1/\lfloor 1/\alpha\rfloor\omega_f(r)\le \omega_f(r/\lfloor 1/\alpha\rfloor)\le \omega_f(2\alpha r).\end{equation}

The ReLU function $x\mapsto x_+$ allows us to implement the binary $\max$ operation as $\max(a,b)=a+(b-a)_+$ and the binary min operation as $\min(a,b)=a-(a-b)_+$. The maximum or minimum of any $n$ numbers can then be implemented by chaining $n-1$ binary $\max$'s or $\min$'s.  Computation of the absolute value can be implemented by $|x|=2x_+-x.$

Without loss of generality, we can assume that hidden units in the ReLU network may not include the ReLU nonlinearity (i.e., may compute just a linear combination of the input values). Indeed, we can simulate nonlinearity-free units just by increasing the weight $h$ in the formula $x_1,\ldots,x_s\mapsto (\sum_{k=1}^s w_k x_k+h)_+$, so that $\sum_{k=1}^s w_k x_k+h$ is always nonnegative (this is possible since the network inputs are from a compact set and the network implements a continuous function), and then compensating the shift by subtracting appropriate amounts in the units receiving input from the unit in question. In particular, we can simulate in this way trivial ``pass forward'' (identity) units that simply deliver given values to some later layers of the network. 

In the context of deep networks of width $H$ considered in Section \ref{sec:proofmain}, it will be occasionally convenient to think of the network as consisting of $H$ ``channels'', i.e. sequences of units with one unit per layer.  Channels can be used to pass forward values or to perform computations. For example, suppose that we already have $n$ channels that pass forward $n$ numbers. If we also need to compute the maximum of these numbers, then, by chaining binary $\max$'s, this can be done in a subnetwork including one additional channel that spans $n$ layers. 

We denote vectors by boldface characters; the scalar components of a vector $\mathbf x$ are denoted $x_1, x_2,\ldots$

\subsection{Proof of Proposition \ref{prop:shallow}}\label{sec:proofshallow}
We start by describing the piecewise linear approximation of the function $f$ on a scale $\frac{1}{N},$ where $N$ is some fixed large integer that we will later relate to the network size $W$. It will be convenient to denote this approximation by  $\widetilde f_1$. This approximation is constructed as an interpolation of $f$ on the grid $(\mathbb Z/N)^\nu$. To this end, we consider the standard triangulation $P_N$ of the space $\mathbb R^\nu$ into the simplexes
$$\Delta^{(N)}_{\mathbf n, \rho}=\{\mathbf x\in\mathbb R^\nu: 0\le x_{\rho(1)}-\tfrac{n_{\rho(1)}}{N}\le\ldots\le x_{\rho(\nu)}-\tfrac{n_{\rho(\nu)}}{N}\le \tfrac{1}{N}\},$$ where $\mathbf n=(n_1,\ldots,n_\nu)\in\mathbb Z^\nu$ and $\rho$ is a permutation of $\nu$ elements. This triangulation can be described as resulting by dissecting the space $\mathbb R^\nu$ by the hyperplanes $x_k-x_s=\tfrac{n}{N}$ and $x_k=\tfrac{n}{N}$ with various $1\le k,s\le \nu$ and $n\in \mathbb Z.$

The vertices of the simplexes of the triangulation $P_N$ are the points of the grid $(\mathbb Z/N)^\nu$.  Each such point $\frac{\mathbf n}{N}$ is a vertex in $(\nu+1)!$ simplexes. The union of these $(\nu+1)!$ simplexes is the convex set $\{\mathbf x\in \mathbb R^\nu: (\max(\mathbf x-\tfrac{\mathbf n}{N}))_+-(\min(\mathbf x-\tfrac{\mathbf n}{N}))_-\le\tfrac{1}{N}\}$, where $a_-=\min(a,0).$  

We define the ``spike function'' $\phi:\mathbb R^\nu\to \mathbb R$ as  the continuous piecewise linear function  such that 1) it is linear on each simplex of the partition $P_{N=1}$, and 2) $\phi(\mathbf n)=\mathbbm 1[\mathbf n=\mathbf 0]$ for $\mathbf n\in \mathbb Z^\nu$. The spike function can be expressed in terms of linear and ReLU operations as follows. Let $R$ be the set of the $(\nu+1)!$ simplexes having $\mathbf 0$ as a vertex. For each $\Delta\in R$ let $l_\Delta:\mathbb R^\nu\to \mathbb R$ be the affine map such that $l_\Delta(\mathbf 0)=1$ and $l_{\Delta}$ vanishes on the face of $\Delta$ opposite to the vertex $\mathbf 0.$ Then \begin{equation}\label{eq:spike}\phi(\mathbf x)=\big(\min_{\Delta\in R}(l_\Delta(\mathbf x))\big)_+.\end{equation} 
Indeed, if $\mathbf x\notin \cup_{\Delta\in R}\Delta,$ then it is easy to see that we can find some $\Delta\in R$ such that $l_{\Delta}(\mathbf x)<0$, hence $\phi$ vanishes outside $\cup_{\Delta\in R}\Delta$, as required. On the other hand, consider the restriction of $\phi$ to $\cup_{\Delta\in R}\Delta$. Each $l_\Delta$ is nonnegative on this set. For each $\Delta_1$ and $\mathbf x\in \cup_{\Delta\in R}\Delta$, the value $l_{\Delta_1}(\mathbf x)$ is a convex combination of the values of $l_{\Delta_1}$ at the vertices of the  simplex $\Delta_{\mathbf x}$ that $\mathbf x$ belongs to. Since $l_{\Delta}$ is nonnegative and $l_{\Delta}(\mathbf 0)=1$ for all $\Delta$, the minimum in \eqref{eq:spike} is attained at $\Delta$ such that $l_{\Delta}$ vanishes at all vertices of $\Delta_{\mathbf x}$ other than $\mathbf 0$, i.e. at $\Delta=\Delta_{\mathbf x}$.  

Note that each map $l_\Delta$ in \eqref{eq:spike} is either of the form $1+x_k-x_s$ or $1\pm x_k$ (different simplexes may share the same map $l_\Delta$), so the minimum actually needs to be taken only over  $\nu(\nu+1)$ different $l_\Delta(\mathbf x)$: $\phi(\mathbf x)=\big(\min(\min_{k\ne s}(1+x_k-x_s), \min_k(1+x_k), \min_k(1-x_k))\big)_+.$ 

We define now the piecewise linear interpolation $\widetilde f_1$ by 
\begin{equation}\label{eq:wtf1}
\widetilde f_1(\mathbf x) = \sum_{\mathbf n\in\{0,1,\ldots,N\}^\nu}f(\tfrac{\mathbf n}{N})\phi(N\mathbf x-\mathbf n).
\end{equation} 
The function $\widetilde f_1$ is linear on each simplex $\Delta_{\mathbf n,\rho}^{(N)}$ and agrees with $f$ at the interpolation knots $\mathbf x\in (\mathbb Z/N)^\nu\cap [0,1]^\nu$. We can bound $\||\nabla \widetilde f_1|\|_\infty\le\sqrt{\nu}N\omega_f(\tfrac{1}{N})$, since any partial derivative $\partial_k \widetilde f_1$ in the interior of a simplex equals $N(f(\tfrac{\mathbf n_1}{N})-f(\tfrac{\mathbf n_2}{N}))$, where $\tfrac{\mathbf n_1}{N}, \tfrac{\mathbf n_2}{N}$ are the two vertices of the simplex having all coordinates the same except $x_k$. It follows that the modulus of continuity for $\widetilde f_1$ can be bounded by $\omega_{\widetilde f_1}(r)\le\sqrt{\nu}N\omega_f(\tfrac{1}{N})r$. Moreover, by Eq.\eqref{eq:alphaomega} we have $\frac{Nr}{2}\omega_f(\frac{1}{N})\le \omega_f(r)$ as long as $r<\frac{2}{N}.$ Therefore we can also write $\omega_{\widetilde f_1}(r)\le 2\sqrt{\nu}\omega_f(r)$  for $r\in[0,\frac{2}{N}].$

Consider now the discrepancy \[f_2=f-\widetilde f_1.\] We can bound the modulus of continuity  of $f_2$ by $\omega_{f_2}(r)\le \omega_{f}(r)+\omega_{\widetilde f_1}(r)\le (2\sqrt{\nu}+1)\omega_f(r)$ for $r\in[0,\frac{2}{N}].$ Since any point $\mathbf x\in[0,1]^\nu$ is within the distance $\tfrac{\sqrt{\nu}}{2N}$ of one of the interpolation knots $\tfrac{\mathbf n}{N}$ where $f_2$ vanishes, we can also write 
\begin{equation}\label{eq:f2infty}\|f_2\|_\infty\le 
\omega_{f_2}(\tfrac{\sqrt{\nu}}{2N})\le  
\sqrt{\nu} \omega_{f_2}(\tfrac{1}{N})\le  
\sqrt{\nu}(2\sqrt{\nu}+1)\omega_{f}(\tfrac{1}{N}) \le 
3\nu\omega_{f}(\tfrac{1}{N}),\end{equation} where in the second inequality we again used Eq.\eqref{eq:alphaomega}. 

We observe now that formula \eqref{eq:wtf1} can be represented by a parallel network shown in Fig. \ref{fig:shallow} in which the blocks compute the values $\phi(N\mathbf x-\mathbf n)$ and their output connections carry the weights $f(\tfrac{\mathbf n}{N})$. Since the blocks have the same architecture only depending on $\nu$, for a network with $n$ blocks the total number of weights is $O(n)$. It follows that the number of weights will not exceed $W$ if we take $N=cW^{1/\nu}$ with a suficiently small constant $c$. Then, the error bound \eqref{eq:f2infty} ensures the desired approximation rate \eqref{eq:mainineq} with $p=\frac{1}{\nu}$.

\section{Proof of Theorem \ref{th:main}}\label{sec:proofmain}
We divide the proof into three parts. In Section \ref{sec:twoscales} we construct the ``two-scales'' approximation $\widetilde f$ for the given function $f$ and estimate its accuracy. In Section \ref{sec:decode} we describe an efficient way to store and evaluate the refined approximation using the bit extraction technique. Finally, in Section \ref{sec:netimplem} we describe the neural network implementations of $\widetilde f$ and verify the network size constraints. 

\subsection{The two-scales approximation and its accuracy}\label{sec:twoscales}

We follow the outline of the proof given after the statement of Theorem \ref{th:main} and start by constructing the initial interpolating approximation $\widetilde f_1$ to $f$ using Proposition \ref{prop:shallow}. We may assume without loss of generality that $\widetilde f_1$ must be implemented by a network with not more than $W/2$ weights, reserving the remaining $W/2$ weights for the second approximation. Then $\widetilde f_1$ is given by Eq. \eqref{eq:wtf1}, where 
\begin{equation}\label{eq:ncwp}N=\lfloor c_1W^{1/\nu}\rfloor
\end{equation} with a sufficiently small constant $c_1$. The error of the  approximation $\widetilde f_1$ is given by Eq. \eqref{eq:f2infty}. We turn now to approximating the discrepancy $f_2=f-\widetilde f_1.$

\paragraph{Decomposition of the discrepancy.}
 It is convenient to represent $f_2$ as a finite sum of functions with supports consisting of disjoint ``patches'' of linear size $\sim\tfrac{1}{N}.$ Precisely, let $S=\{0,1,2\}^\nu$ and consider the partition of unity on the cube $[0,1]^\nu$
$$1=\sum_{\mathbf q\in S}g_{\mathbf q},$$
where
\begin{equation}\label{eq:gq}g_{\mathbf q}(\mathbf x)=\sum_{\mathbf n \in (\mathbf q+(3\mathbb Z)^\nu)\cap [0,N]^\nu}\phi(N\mathbf x-\mathbf n).\end{equation}
We then write 
\begin{equation}\label{eq:f2sum}f_2=\sum_{\mathbf q\in S}f_{2,\mathbf q},\end{equation}
where $$f_{2, \mathbf q}=f_2g_{\mathbf q}.$$
Each function $f_{2,\mathbf q}$ is supported on the disjoint union of  cubes $Q_\mathbf{n}=\bigtimes_{s=1}^\nu[\frac{n_s-1}{N},\frac{n_s+1}{N}]$ with $\mathbf n \in (\mathbf q+(3\mathbb Z)^\nu)\cap [0,N]^\nu$ corresponding to the spikes in the expansion \eqref{eq:gq}. With some abuse of terminology, we will refer to both these cubes and the respective restrictions $f_{2,\mathbf q}|_{Q_{\mathbf n}}$ as ``patches''. 

Since $\|g_{\mathbf q}\|_\infty\le 1,$ we have $\|f_{2, \mathbf q}\|_\infty\le \|f_{2}\|_\infty\le 3\nu \omega_f(\frac{1}{N})$ by Eq. \eqref{eq:f2infty}. Also, if $r\in[0,\frac{2}{N}]$ then 
\begin{align}\label{eq:lambda0}\omega_{f_{2,\mathbf q}}(r)&\le \|f_2\|_\infty \omega_{g_{\mathbf q}}(r)+\|g_{\mathbf q}\|_\infty \omega_{f}(r)\nonumber\\ 
&\le 3\nu \omega_f(\tfrac{1}{N})\sqrt{\nu}N r+\omega_f(r)\\
&\le (6\nu^{3/2}+1)\omega_f(r),\nonumber\end{align}  
where in the last step we used inequality \eqref{eq:alphaomega}.

\paragraph{The second (discrete) approximation.} We will construct the full approximation $\widetilde f$ in the form \begin{equation}\label{eq:ff1f2}\widetilde f=\widetilde f_1+\widetilde f_2=\widetilde f_1+\sum_{\mathbf q\in S}\widetilde f_{2,\mathbf q},\end{equation} where  $\widetilde f_{2,\mathbf q}$ are approximations to $f_{2,\mathbf q}$ on a smaller length scale $\frac{1}{M}$. We set 
\begin{equation}\label{eq:mcwp}M=c_2W^p,\end{equation} 
where $c_2$ is a sufficiently small constant to be chosen later and $p$ is the desired power in the approximation rate. We will assume without loss of generality that $\frac{M}{N}$ is integer, since in the sequel it will be convenient to consider the grid $(\mathbb Z/N)^\nu$  as a subgrid in the refined grid $(\mathbb Z/M)^\nu$.

We define $\widetilde f_{2,\mathbf q}$ to be piecewise linear  with respect to the refined triangulation $P_{M}$ and to be given on the refined grid $(\mathbb Z/M)^\nu$ by
\begin{equation}\label{eq:ft2qmn}\widetilde f_{2,\mathbf q}(\tfrac{\mathbf m}{M})=\lambda\lfloor f_{2,\mathbf q}(\tfrac{\mathbf m}{M})/\lambda\rfloor, \quad \mathbf m\in[0,\ldots, M]^\nu.\end{equation}
Here the discretization parameter $\lambda$ is given by
\begin{equation}\label{eq:lam0}\lambda =   (6\nu^{3/2}+1)\omega_f(\tfrac{1}{M})\end{equation}
so that, by Eq.~\eqref{eq:lambda0}, we have
\begin{equation}\label{eq:lambda}\omega_{f_{2,\mathbf q}}(\tfrac{1}{M})\le \lambda.\end{equation}

\paragraph{Accuracy of the full approximation.}
Let us estimate the accuracy $\|f-\widetilde f\|_\infty$ of the full approximation $\widetilde f$. First we estimate $\|f_{2,\mathbf q}-\widetilde f_{2,\mathbf q}\|_\infty$. Consider the piecewise linear function $\widehat f_{2,\mathbf q}$ defined similarly to $\widetilde f_{2,\mathbf q}$, but exactly interpolating $f_{2,\mathbf q}$ at the knots $(\mathbb Z/M)^\nu.$ Then, by Eq. \eqref{eq:ft2qmn}, $\|\widehat f_{2,\mathbf q}-\widetilde f_{2,\mathbf q}\|_\infty\le \lambda$, since in each simplex the difference $\widehat f_{2,\mathbf q}(\mathbf x)-\widetilde f_{2,\mathbf q}(\mathbf x)$ is a convex combination of the respective values at the vertices. Also, $\|\widehat f_{2,\mathbf q}-f_{2,\mathbf q}\|_\infty\le 3\nu\omega_{f_{2,\mathbf q}}(\frac{1}{M})$ by applying the interpolation error bound \eqref{eq:f2infty} with $M$ instead of $N$ and $f_{2,\mathbf q}$ instead of $f$. Using Eqs. \eqref{eq:lam0} and \eqref{eq:lambda}, it follows that 
\begin{align}\label{eq:ff}\|f_{2,\mathbf q}-\widetilde f_{2,\mathbf q}\|_\infty &\le \|f_{2,\mathbf q}-\widehat f_{2,\mathbf q}\|_\infty+\|\widehat f_{2,\mathbf q}-\widetilde f_{2,\mathbf q}\|_\infty\nonumber\\
&\le 3\nu\lambda+\lambda\\
&\le (3\nu+1)(6\nu^{3/2}+1)\omega_f(\tfrac{1}{M}).\nonumber\end{align}
Summing the errors over all $\mathbf q\in S$, we can bound the error of the full approximation using representations \eqref{eq:f2sum}, \eqref{eq:ff1f2} and the above bound:
\begin{align}\label{eq:finalerr}
\|f-\widetilde f\|_\infty &= \|f_2-\widetilde f_2\|_\infty\nonumber\\
&\le \sum_{\mathbf q\in S} \|f_{2,\mathbf q}-\widetilde f_{2,\mathbf q}\|_\infty \\ 
&\le 3^\nu(3\nu+1)(6\nu^{3/2}+1)\omega_f(\tfrac{1}{M}).\nonumber
\end{align}
Observe that, by Eq.\eqref{eq:mcwp}, this yields the desired approximation rate \eqref{eq:mainineq} provided we can indeed implement $\widetilde f$ with not more than $W$ weights:
\begin{equation}\label{eq:fwfbound}
\|f-\widetilde f\|_\infty\le 3^\nu(3\nu+1)(6\nu^{3/2}+1)\omega_f(c_2^{-1}W^{-p}).
\end{equation}  

\subsection{Storing and decoding the refined approximation}\label{sec:decode}

We have reduced our task to implementing  the functions $\widetilde f_{2,\mathbf q}$ subject to the network size and depth constraints. We describe now an efficient way to compute these functions using a version of the bit extraction technique.

\paragraph{Patch encoding.}

Fix $\mathbf q\in S$. Note that, like $f_{2,\mathbf q}$, the approximation $\widetilde f_{2,\mathbf q}$ vanishes outside of the cubes $Q_{\mathbf n}$ with $\mathbf n \in (\mathbf q+(3\mathbb Z)^\nu)\cap [0,N]^\nu$. Fix one of these $\mathbf n$ and consider those values $\lfloor f_{2,\mathbf q}(\tfrac{\mathbf m}{M})/\lambda\rfloor$ from Eq.\eqref{eq:ft2qmn} that lie in this patch: 
$$A_{\mathbf q, \mathbf n}(\mathbf m)=\lfloor f_{2,\mathbf q}(\tfrac{\mathbf n}{N}+\tfrac{\mathbf m}{M})/\lambda\rfloor, \quad \mathbf m\in[-\tfrac{M}{N},\ldots, \tfrac{M}{N}]^\nu.$$

By the bound \eqref{eq:lambda}, if $\mathbf m_1,\mathbf m_2$ are neighboring points on the grid $\mathbb Z^\nu$, then $A_{\mathbf q, \mathbf n}(\mathbf m_1)-A_{\mathbf q, \mathbf n}(\mathbf m_2)\in\{-1,0,1\}.$ Moreover, since $f_{2,\mathbf q}$ vanishes on the boundary of $Q_{\mathbf n}$, we have $A_{\mathbf q, \mathbf n}(\mathbf m)=0$ if one of the components of $\mathbf m$ equals $-\frac{M}{N}$ or $\frac{M}{N}$. Let us consider separately the first component in the multi-index $\mathbf m$ and write  $\mathbf m=(m_1,\overline{\mathbf m})$. 
Denote $$B_{\mathbf q, \mathbf n}({\mathbf m})=A_{\mathbf q, \mathbf n}(m_1,\overline{\mathbf m})-A_{\mathbf q, \mathbf n}(m_1+1,\overline{\mathbf m}), \quad \mathbf m\in [-\tfrac{M}{N}+1,\ldots, \tfrac{M}{N}-1]^\nu.$$ 
Since $B_{\mathbf q,\mathbf n}(\mathbf m)\in\{-1,0,1\},$ we can encode all the $(2M/N-1)^{\nu}$ values $B_{\mathbf q,\mathbf n}(\mathbf m)$ by a single ternary number $$b_{\mathbf q, \mathbf n}=\sum_{t=1}^{(2M/N-1)^{\nu}}3^{-t}(B_{\mathbf q,\mathbf n}(\mathbf m_t)+1),$$
where $t\mapsto\mathbf m_t$ is some enumeration of the multi-indices $\mathbf m$.  The values $b_{\mathbf q, \mathbf n}$ for all $\mathbf q$ and $\mathbf n$ will be stored as weights in the network and encode the approximation $\widetilde f_2$ in all the patches.
 
\paragraph{Reconstruction of the values $\{B_{\mathbf q,\mathbf n}(\mathbf m)\}$.}\label{sec:reconstrb}
The values $\{B_{\mathbf q,\mathbf n}(\mathbf m)\}$ are, up to the added constant 1, just the digits in the ternary representation of $b_{\mathbf q, \mathbf n}$ and can be recovered from $b_{\mathbf q, \mathbf n}$ by a deep ReLU network that iteratively implements ``ternary shifts''. Specifically, consider the sequence $z_t$ with $z_0=b_{\mathbf q, \mathbf n}$ and $z_{t+1}=3z_t -\lfloor 3z_t\rfloor$. Then $B_{\mathbf q,\mathbf n}(\mathbf m_t)=\lfloor 3z_{t-1} \rfloor-1$ for all $t$. To implement these computations by a ReLU network, we need to show how to compute $\lfloor 3z_{t}\rfloor$ for all $z_t$. Consider a piecewise-linear function $\chi_\epsilon:[0,3)\to \mathbb R$ such that \begin{equation}\label{eq:chi}\chi_\epsilon(x)=\begin{cases}0, & x\in[0,1-\epsilon],\\
1, & x\in[1,2-\epsilon],\\
2, & x\in[2,3-\epsilon].\end{cases}\end{equation}
Such a function can be implemented by $\chi_\epsilon (x)=\frac{1}{\epsilon}(x-(1-\epsilon))_+-\frac{1}{\epsilon}(x-1)_++\frac{1}{\epsilon}(x-(2-\epsilon))_+-\frac{1}{\epsilon}(x-2)_+.$ Observe that if $\epsilon< 3^{-(2M/N-1)^{\nu}}$, then for all $t$ the number $3z_t$ belongs to one of the three intervals in the r.h.s. of Eq.\eqref{eq:chi} and hence $\chi_\epsilon(3z_t)=\lfloor 3z_t\rfloor$. Thus, we can reconstruct the values $B_{\mathbf q,\mathbf n}(\mathbf m)$ for all $\mathbf m$ by a ReLU network with $O((M/N)^\nu)$ layers and weights. 

\paragraph{Computation of $\widetilde f_{2,\mathbf q}$ in a patch.} On the patch $Q_{\mathbf n}$, the function  $\widetilde f_{2,\mathbf q}$ can be expanded over the spike functions as
$$\widetilde f_{2,\mathbf q}(\mathbf x)=\lambda\sum_{\overline{\mathbf m}\in [-M/N+1,\ldots,M/N-1]^{\nu-1}}\sum_{m_1=-M/N+1}^{M/N-1}\phi(M(\mathbf x-(\tfrac{\mathbf n}{N}+\tfrac{(m_1, \overline{\mathbf m})}{M}))A_{\mathbf q, \mathbf n}(m_1, \overline{\mathbf m}).$$
It is convenient to rewrite this computation in terms of the numbers $B_{\mathbf q, \mathbf n}(\mathbf m)$ and the expressions 
\begin{equation}\label{eq:Phi}\Phi_{\mathbf n,\overline{\mathbf m}}(m_1, \mathbf x)=\sum_{s=-M/N+1}^{m_1}\phi(M(\mathbf x-(\tfrac{\mathbf n}{N}+\tfrac{(m_1, \overline{\mathbf m})}{M}))\end{equation}
using summation by parts in the direction $x_1$:
\begin{align}\widetilde f_{2,\mathbf q}(\mathbf x)
&=\lambda\sum_{\overline{\mathbf m}\in [-\frac{M}{N}+1,\ldots,\frac{M}{N}-1]^{\nu-1}}\sum_{m_1=-\frac{M}{N}+1}^{\frac{M}{N}}(\Phi_{\mathbf n,\overline{\mathbf m}}(m_1, \mathbf x)-\Phi_{\mathbf n,\overline{\mathbf m}}(m_1-1, \mathbf x))A_{\mathbf q, \mathbf n}(m_1, \overline{\mathbf m})\nonumber\\
&=\lambda\sum_{\overline{\mathbf m}\in [-\frac{M}{N}+1,\ldots,\frac{M}{N}-1]^{\nu-1}}\sum_{m_1=-\frac{M}{N}+1}^{\frac{M}{N}-1}\Phi_{\mathbf n,\overline{\mathbf m}}(m_1, \mathbf x)B_{\mathbf q, \mathbf n}(m_1, \overline{\mathbf m}),\label{eq:f2qnphib}
\end{align}
where we used the identities $A_{\mathbf q, \mathbf n}(-M/N, \overline{\mathbf m})=A_{\mathbf q, \mathbf n}(M/N, \overline{\mathbf m})=0$.

The above representation involves products $\Phi_{\mathbf n,\overline{\mathbf m}}(m_1, \mathbf x)B_{\mathbf q, \mathbf n}(m_1, \overline{\mathbf m}),$ and we show now how to implement them by a ReLU network. Note that $\Phi_{\mathbf n,\overline{\mathbf m}}(m_1, \mathbf x)\in[0,1]$ and $B_{\mathbf q, \mathbf n}(m_1, \overline{\mathbf m})\in\{-1,0,1\}$. But for any $x\in[0,1], y\in \{-1,0,1\}$ we can write
\begin{equation}\label{eq:xy}xy=(x+y-1)_++(-x-y)_+-(-y)_+.\end{equation}

\paragraph{Mapping $\mathbf x$ to the respective patch.} We will use the obtained formula \eqref{eq:f2qnphib} as a basis for computing $\widetilde f_{2,\mathbf q}(\mathbf x)$. But the formula is dependent on the patch the input $\mathbf x$ belongs to: the value $\Phi_{\mathbf n,\overline{\mathbf m}}(m_1, \mathbf x)$ depends on the patch index $\mathbf n$ by Eq.\eqref{eq:Phi}, and the values $B_{\mathbf q, \mathbf n}(m_1, \overline{\mathbf m})$ must be recovered from the patch-specific encoding weight $b_{\mathbf q,\mathbf n}$.  

Given $\mathbf x$, the relevant  value of $b_{\mathbf q, \mathbf n}$ can be selected among the values for all patches by the ReLU network computing
\begin{equation}\label{eq:bqx}b_{\mathbf q}(\mathbf x)=\sum_{\mathbf n \in (\mathbf q+(3\mathbb Z)^\nu)\cap [0,N]^\nu} \tfrac{b_{\mathbf q,\mathbf n}}{2} ((2-u)_+-(1-u)_+),\text{ where }u=\max_{s=1,\ldots,\nu}|Nx_s-n_s|.\end{equation}
If $\mathbf x$ belongs to some patch $Q_{\mathbf n},$ then $b_{\mathbf q}(\mathbf x)=b_{\mathbf q,\mathbf n}$, as required. If $\mathbf x$ does not belong to any patch, then $b_{\mathbf q}(\mathbf x)$ computes some garbage value which will be unimportant because we will ensure that all the factors $\Phi$ appearing in the sum \eqref{eq:f2qnphib} will vanish for such $\mathbf x$.

Now we show how to perform a properly adjusted computation of $\Phi_{\mathbf n,\overline{\mathbf m}}(m_1, \mathbf x)$. Namely, we will compute the function $\widetilde\Phi_{\overline{\mathbf m},\mathbf q}(m_1, \mathbf x)$ such that
\begin{numcases}{\widetilde\Phi_{\overline{\mathbf m},\mathbf q}(m_1, \mathbf x)=}
\Phi_{\mathbf n,\overline{\mathbf m}}(m_1, \mathbf x), & $\mathbf x\in Q_\mathbf{n}, \mathbf n \in (\mathbf q+(3\mathbb Z)^\nu)\cap [0,N]^\nu,$  \label{eq:phicase1}\\
0, &otherwise.\label{eq:phicase2}
\end{numcases}
First construct, using linear and ReLU operations, a map $\Psi_{\mathbf q}:[0,1]^\nu \to (\mathbb Z/N)^\nu$ mapping the patches $Q_\mathbf{n}$ to their centers: \begin{equation}\label{eq:psinn}\Psi_{\mathbf q}(\mathbf x)=\frac{\mathbf n}{N},\quad \text{if }\mathbf x\in Q_\mathbf{n}, \mathbf n \in (\mathbf q+(3\mathbb Z)^\nu)\cap [0,N]^\nu.\end{equation}
Such a map can be implemented by \begin{equation}\label{eq:psipsi}\Psi_{\mathbf q}(\mathbf x)=(\psi_{q_1}(x_1),\ldots,\psi_{q_\nu}(x_\nu)),\end{equation}
where \begin{equation}\label{eq:psiqn}\psi_{q}(x)=\frac{q}{N}+\sum_{k=0}^{\lceil N/3\rceil} ((x-\tfrac{q+3k+1}{N})_+-(x-\tfrac{q+3k+2}{N})_+). \end{equation}
Now if we try to define $\widetilde\Phi_{\overline{\mathbf m},\mathbf q}(m_1, \mathbf x)$ just by replacing $\frac{\mathbf n}{N}$ with $\Psi_{\mathbf q}(\mathbf x)$ in the definition \eqref{eq:Phi}, then we fulfill the requirement \eqref{eq:phicase1}, but not \eqref{eq:phicase2}. To also fulfill \eqref{eq:phicase2}, consider the auxiliary ``suppressing'' function \begin{equation}\label{eq:theta}\theta_\mathbf{q}(\mathbf x)=N\sum_{\mathbf n \in (\mathbf q+(3\mathbb Z)^\nu)\cap [0,N]^\nu}  (1-\max_{s=1,\ldots,\nu}|Nx_s-n_s|)_+.\end{equation}
If $\mathbf x$ does not lie in any patch $Q_{\mathbf n},$ then $\theta_\mathbf{q}(\mathbf x)=0.$ On the other hand, $\theta_\mathbf{q}(\mathbf x)\ge\Phi_{\mathbf n,\overline{\mathbf m}}(m_1, \mathbf x)$ for all $\mathbf n \in (\mathbf q+(3\mathbb Z)^\nu)\cap [0,N]^\nu, (m_1, \overline{\mathbf m})\in[-\frac{M}{N}+1,\frac{M}{N}-1]^\nu$ and $\mathbf x\in [0,1]^\nu$. It follows that if we set  
\begin{equation}\label{eq:tphimin}\widetilde\Phi_{\overline{\mathbf m},\mathbf q}(m_1, \mathbf x)=\min\Big(\sum_{s=-M/N+1}^{m_1}\phi(M(\mathbf x-(\Psi_{\mathbf q}(\mathbf x)+\tfrac{(m_1, \overline{\mathbf m})}{M}))), \theta_\mathbf{q}(\mathbf x)\Big),\end{equation}
then this $\widetilde\Phi_{\overline{\mathbf m},\mathbf q}(m_1, \mathbf x)$ satisfies both \eqref{eq:phicase1} and \eqref{eq:phicase2}. Then, based on \eqref{eq:f2qnphib}, we can write the final formula for  $\widetilde f_{2,\mathbf q}(\mathbf x)$ as
\begin{equation}\label{eq:ft2q}\widetilde f_{2,\mathbf q}(\mathbf x)=\lambda\sum_{\overline{\mathbf m}\in [-M/N+1,\ldots,M/N-1]^{\nu-1}}\sum_{m_1=-M/N+1}^{M/N-1}\widetilde\Phi_{\overline{\mathbf m},\mathbf q}(m_1, \mathbf x)B_{\mathbf q, \mathbf n}(m_1, \overline{\mathbf m}).
\end{equation}

\paragraph{Summary of the computation of $\widetilde f_{2,\mathbf q}$.}\label{sec:sumf2q} We summarize now how $\widetilde f_{2,\mathbf q}(\mathbf x)$ is computed by the ReLU network. 
\begin{enumerate}
\item The patch-encoding number $b_{\mathbf q}(\mathbf x)$ and the auxiliary functions $\Psi_{\mathbf q}(\mathbf x), \theta_\mathbf{q}(\mathbf x)$ are computed  by Eqs.\eqref{eq:bqx}, \eqref{eq:psinn}-\eqref{eq:psiqn}, \eqref{eq:theta}, respectively.
\item The numbers $B_{\mathbf q,\mathbf n}(\mathbf m)$ that describe the patch containing $\mathbf x$ are decoded from $b_{\mathbf q}(\mathbf x)$ by a deep network with $O((M/N)^\nu)$ layers and weights  as detailed in Section \ref{sec:reconstrb}. We choose the enumeration $t\mapsto\mathbf m_t$ so that same-$\overline{\mathbf m}$ multi-indices $\mathbf m$ form contiguous blocks of length $2M/N-1$ corresponding to summation over $m_1$ in \eqref{eq:ft2q}; the ordering of the multi-indices $\overline{\mathbf m}$ is arbitrary. 
\item The values $\widetilde\Phi_{\overline{\mathbf m},\mathbf q}(m_1, \mathbf x)$ are computed iteratively in parallel to the respective $B_{\mathbf q,\mathbf n}(\mathbf m)$, using Eq.\eqref{eq:tphimin}. For $m_1>-M/N+1$, the sum $\sum_{s=-M/N+1}^{m_1}\phi(\ldots)$ is computed by adding the next term to the sum $\sum_{s=-M/N+1}^{m_1-1}\phi(\ldots)$ retained from the previous step. This shows that all the values $\widetilde\Phi_{\overline{\mathbf m},\mathbf q}(m_1, \mathbf x)$ can be computed with $O((M/N)^\nu)$ linear and ReLU operations. 
\item The sum \eqref{eq:ft2q} is computed with the help of multiplication formula \eqref{eq:xy}.
\end{enumerate}

\subsection{Network implementations}\label{sec:netimplem}

\paragraph{The weight and depth bounds.} 
We can now establish statement a) of Theorem \ref{th:main}. Recall that we have already established the approximation rate in Eq. \eqref{eq:fwfbound}, but still need to show that the total number of weights is bounded by $W$. The initial approximation $\widetilde f_1$ was constructed so as to be implementable by a network architecture with $W/2$ weights, and we need to show now that the second approximation $\widetilde f_2$ can also be implemented by an architecture with not more than $W/2$ weights. 

By Eq. \eqref{eq:ff1f2}, computation of $\widetilde f_2$ amounts to $3^\nu$ computations of the terms $\widetilde f_{2,\mathbf q}$. These latter computations are detailed in stages 1--4 in Section \ref{sec:sumf2q}. Stage 1 can be performed by a parallellized ReLU network of depth $O(1)$ with $O(N^\nu)$ elementary operations. If the constant $c_1$ in our choice \eqref{eq:ncwp} of $N$ is small enough, then the total number of weights required for stage 1 does not exceed $W/4$.  Next, stages 2--4 require a deep network that has $O((M/N)^\nu)$ layers and weights. By our choice of $N,M$ in Eqs. \eqref{eq:ncwp}, \eqref{eq:mcwp} we can write $O((M/N)^\nu)$ as $O((c_2/c_1)^\nu W^{p\nu-1})$. Assuming $c_2$ is small enough, the number of weights for stages 2--4 does not exceed $W/4$ as well, and so the total number of weights required for $\widetilde f$ does not exceed $W$, as desired. In addition, the depth of the network implementation of $\widetilde f$ is determined by the depth of stages 2--4, i.e. is $O(W^{p\nu-1}),$ also as claimed in Theorem \ref{th:main}a).

We describe now in more detail the implementation of $\widetilde f$ by the narrow fully-connected or by the stacked architectures as stated in parts b) and c) of Theorem \ref{th:main}.

\paragraph{Implementation of $\widetilde f$ in the case $p=\frac{2}{\nu}$.}

In this case  we implement $\widetilde f$ by a narrow fully-connected network and estimate now its sufficient width. We reserve $\nu$ channels to pass forward the $\nu$ components of the input $\mathbf x$ and one channel to store partial results of the sum $\widetilde f(\mathbf x)=\widetilde f_1(\mathbf x)+\sum_{\mathbf q\in\{0,1,2\}^\nu}\widetilde f_{2,\mathbf q}(\mathbf x).$ The terms in this sum can be computed in a serial way, so we only need to estimate the network width sufficient for implementing $\widetilde f_1$ and any $\widetilde f_{2,\mathbf q}$. 

Given $\mathbf x$, the value $\phi(\mathbf x)$ or, more generally, $\phi(N\mathbf x-\mathbf n)$, can be computed by \eqref{eq:spike} using chained binary $\min$'s and just one additional channel. 
Therefore, by \eqref{eq:wtf1}, $\widetilde f_1$ can be implemented with the standard network of width $\nu+2$.

Now consider implementation of $\widetilde f_{2,\mathbf q}$ as detailed in Section \ref{sec:sumf2q}. In stage 1, we compute the numbers $b_{\mathbf q}(\mathbf x), \theta_\mathbf{q}(\mathbf x)$ and the $\nu$-dimensional vector $\Psi_{\mathbf q}(\mathbf x)$. 
This can be done with $\nu+3$ channels (one extra channel is reserved for storing partial sums). We reserve $\nu+1$ channels for the next stages to pass forward the values  $\Psi_{\mathbf q}(\mathbf x), \theta_\mathbf{q}(\mathbf x)$. In stage 2, we decode the numbers $B_{\mathbf q, \mathbf n}(\mathbf m)$ from $b_{q}(\mathbf x).$ This can be done with 4 channels: one is used to store and refresh the values $z_t$, another to output the  sequence $B_{\mathbf q, \mathbf n}(\mathbf m_t)$, and two more channels to compute $\chi_\epsilon(3 z_t)$. In stage 3,  we compute the numbers $\widetilde\Phi_{\overline{\mathbf m},\mathbf q}(m_1, \mathbf x)$. This can be done with 3 channels: one is used to keep partial sums 
$\sum_{s=-M/N+1}^{m_1}\phi(\ldots)$, another to compute current $\phi(\ldots)$, and the third to compute $\widetilde\Phi_{\overline{\mathbf m},\mathbf q}(m_1, \mathbf x)$ by Eq.\eqref{eq:tphimin}. Stages 2 and 3 are performed in parallel and their channels are aligned so that for each $\mathbf m$ the values $B_{\mathbf q, \mathbf n}(\mathbf m)$ and $\widetilde\Phi_{\overline{\mathbf m},\mathbf q}(m_1, \mathbf x)$ can be found in the same layer. In stage 4, we compute the sum \eqref{eq:ft2q}. This stage is performed in parallel to stages 2 and 3 and requires one more channel for computing the ReLU operations in the multiplication formula \eqref{eq:xy}. We conclude that $2\nu+10$ channels are sufficient for the whole computation of $\widetilde f_{2,\mathbf q}$ and hence for the whole computation of $\widetilde f.$ 

\paragraph{Implementation of $\widetilde f$ in the case $p\in(\frac{1}{\nu},\frac{2}{\nu})$.}
In this case the initial approximation $\widetilde f_1$ as well as stage 1 in the computations of $\widetilde f_{2,\mathbf q}$ are implemented by the parallel shallow subnetwork and only stages 2--4 of the computations of $\widetilde f_{2,\mathbf q}$ are implemented by the deep narrow subnetwork. We can make some blocks of the parallel subnetwork output the terms $f(\tfrac{\mathbf n}{N})\phi(N\mathbf x-\mathbf n)$ of the expansion of $\widetilde f_1$ while other blocks output the numbers $b_{\mathbf q}(\mathbf x), \theta_\mathbf{q}(\mathbf x)$ and the $\nu$-dimensional vector $\Psi_{\mathbf q}(\mathbf x)$. The stages 2--4 are then implemented as in the previous case $p=\frac{2}{\nu}$, but now all the $\mathbf q\in S$ must be processed in parallel -- that's why the width of the deep subnetwork is taken to be $3^\nu(2\nu+10).$ 

\section*{Acknowledgments}
The author thanks Alexander Kuleshov and the anonymous referees for helpful comments and suggestions. The research was supported by the Skoltech SBI Bazykin/Yarotsky project.

\bibliography{relu}

\begin{thebibliography}{23}
\providecommand{\natexlab}[1]{#1}
\providecommand{\url}[1]{\texttt{#1}}
\expandafter\ifx\csname urlstyle\endcsname\relax
  \providecommand{\doi}[1]{doi: #1}\else
  \providecommand{\doi}{doi: \begingroup \urlstyle{rm}\Url}\fi

\bibitem[Anthony and Bartlett(2009)]{anthony2009neural}
Martin Anthony and Peter~L Bartlett.
\newblock \emph{{Neural network learning: Theoretical foundations}}.
\newblock Cambridge university press, 2009.

\bibitem[Bartlett et~al.(1998)Bartlett, Maiorov, and Meir]{bartlett1998almost}
Peter~L Bartlett, Vitaly Maiorov, and Ron Meir.
\newblock {Almost linear VC-dimension bounds for piecewise polynomial
  networks}.
\newblock \emph{Neural computation}, 10\penalty0 (8):\penalty0 2159--2173,
  1998.

\bibitem[Bartlett et~al.(2017)Bartlett, Harvey, Liaw, and
  Mehrabian]{bartlett2017nearly}
Peter~L Bartlett, Nick Harvey, Chris Liaw, and Abbas Mehrabian.
\newblock Nearly-tight vc-dimension and pseudodimension bounds for piecewise
  linear neural networks.
\newblock \emph{arXiv preprint arXiv:1703.02930}, 2017.

\bibitem[Bianchini and Scarselli(2014)]{bianchini2014complexity}
Monica Bianchini and Franco Scarselli.
\newblock On the complexity of neural network classifiers: A comparison between
  shallow and deep architectures.
\newblock \emph{IEEE transactions on neural networks and learning systems},
  25\penalty0 (8):\penalty0 1553--1565, 2014.

\bibitem[B{\"o}lcskei et~al.(2017)B{\"o}lcskei, Grohs, Kutyniok, and
  Petersen]{bolcskei2017memory}
Helmut B{\"o}lcskei, Philipp Grohs, Gitta Kutyniok, and Philipp Petersen.
\newblock Memory-optimal neural network approximation.
\newblock In \emph{Wavelets and Sparsity XVII}, volume 10394, page 103940Q.
  International Society for Optics and Photonics, 2017.

\bibitem[DeVore(1998)]{devore1998nonlinear}
Ronald~A DeVore.
\newblock Nonlinear approximation.
\newblock \emph{Acta numerica}, 7:\penalty0 51--150, 1998.

\bibitem[DeVore et~al.(1989)DeVore, Howard, and Micchelli]{devore1989optimal}
Ronald~A DeVore, Ralph Howard, and Charles Micchelli.
\newblock Optimal nonlinear approximation.
\newblock \emph{Manuscripta mathematica}, 63\penalty0 (4):\penalty0 469--478,
  1989.

\bibitem[Goldberg and Jerrum(1995)]{goldberg1995bounding}
Paul~W Goldberg and Mark~R Jerrum.
\newblock {Bounding the Vapnik-Chervonenkis dimension of concept classes
  parameterized by real numbers}.
\newblock \emph{Machine Learning}, 18\penalty0 (2-3):\penalty0 131--148, 1995.

\bibitem[Hanin and Sellke(2017)]{hanin2017approximating}
Boris Hanin and Mark Sellke.
\newblock {Approximating Continuous Functions by ReLU Nets of Minimal Width}.
\newblock \emph{arXiv preprint arXiv:1710.11278}, 2017.

\bibitem[He et~al.(2016)He, Zhang, Ren, and Sun]{he2016deep}
Kaiming He, Xiangyu Zhang, Shaoqing Ren, and Jian Sun.
\newblock Deep residual learning for image recognition.
\newblock In \emph{Proceedings of the IEEE conference on computer vision and
  pattern recognition}, pages 770--778, 2016.

\bibitem[Kainen et~al.(1999)Kainen, K\r{u}rkov{\'a}, and
  Vogt]{kainen1999approximation}
Paul~C Kainen, V{\v{e}}ra K\r{u}rkov{\'a}, and Andrew Vogt.
\newblock Approximation by neural networks is not continuous.
\newblock \emph{Neurocomputing}, 29\penalty0 (1):\penalty0 47--56, 1999.

\bibitem[Kearns and Schapire(1990)]{kearns1990efficient}
Michael~J Kearns and Robert~E Schapire.
\newblock Efficient distribution-free learning of probabilistic concepts.
\newblock In \emph{Foundations of Computer Science, 1990. Proceedings., 31st
  Annual Symposium on}, pages 382--391. IEEE, 1990.

\bibitem[Liang and Srikant(2016)]{liang2016why}
Shiyu Liang and R.~Srikant.
\newblock Why deep neural networks?
\newblock \emph{arXiv preprint arXiv:1610.04161}, 2016.

\bibitem[Lu et~al.(2017)Lu, Pu, Wang, Hu, and Wang]{lu2017expressive}
Zhou Lu, Hongming Pu, Feicheng Wang, Zhiqiang Hu, and Liwei Wang.
\newblock The expressive power of neural networks: A view from the width.
\newblock In \emph{Advances in Neural Information Processing Systems}, pages
  6232--6240, 2017.

\bibitem[Mhaskar et~al.(2016)Mhaskar, Liao, and Poggio]{mhaskar2016learning}
Hrushikesh Mhaskar, Qianli Liao, and Tomaso Poggio.
\newblock Learning real and boolean functions: When is deep better than
  shallow.
\newblock \emph{arXiv preprint arXiv:1603.00988}, 2016.

\bibitem[Montufar et~al.(2014)Montufar, Pascanu, Cho, and
  Bengio]{montufar2014number}
Guido~F Montufar, Razvan Pascanu, Kyunghyun Cho, and Yoshua Bengio.
\newblock On the number of linear regions of deep neural networks.
\newblock In \emph{Advances in neural information processing systems}, pages
  2924--2932, 2014.

\bibitem[Petersen and Voigtlaender(2017)]{petersen2017optimal}
Philipp Petersen and Felix Voigtlaender.
\newblock Optimal approximation of piecewise smooth functions using deep relu
  neural networks.
\newblock \emph{arXiv preprint arXiv:1709.05289}, 2017.

\bibitem[Safran and Shamir(2016)]{safran2016depth}
Itay Safran and Ohad Shamir.
\newblock Depth separation in relu networks for approximating smooth non-linear
  functions.
\newblock \emph{arXiv preprint arXiv:1610.09887}, 2016.

\bibitem[Sakurai(1999)]{sakurai1999tight}
Akito Sakurai.
\newblock {Tight Bounds for the VC-Dimension of Piecewise Polynomial Networks}.
\newblock In \emph{Advances in Neural Information Processing Systems}, pages
  323--329, 1999.

\bibitem[{Schmidt-Hieber}(2017)]{schmidthieber2017nonparametric}
J.~{Schmidt-Hieber}.
\newblock {Nonparametric regression using deep neural networks with ReLU
  activation function}.
\newblock \emph{arXiv preprint arXiv:1708.06633}, 2017.

\bibitem[Srivastava et~al.(2015)Srivastava, Greff, and
  Schmidhuber]{srivastava2015highway}
Rupesh~Kumar Srivastava, Klaus Greff, and J{\"u}rgen Schmidhuber.
\newblock Highway networks.
\newblock \emph{arXiv preprint arXiv:1505.00387}, 2015.

\bibitem[Telgarsky(2016)]{telgarsky2016benefits}
Matus Telgarsky.
\newblock Benefits of depth in neural networks.
\newblock \emph{arXiv preprint arXiv:1602.04485}, 2016.

\bibitem[Yarotsky(2017)]{yarotsky2017nn}
Dmitry Yarotsky.
\newblock {Error bounds for approximations with deep ReLU networks}.
\newblock \emph{Neural Networks}, 94:\penalty0 103--114, 2017.

\end{thebibliography}

\end{document}